\documentclass{article} 
\usepackage{iclr2026_conference,times}


\usepackage{amsmath,amsfonts,bm}









\def\eqref#1{equation~\ref{#1}}









\def\1{\bm{1}}










\DeclareMathAlphabet{\mathsfit}{\encodingdefault}{\sfdefault}{m}{sl}
\SetMathAlphabet{\mathsfit}{bold}{\encodingdefault}{\sfdefault}{bx}{n}













\usepackage{xcolor}
\usepackage{graphicx}
\usepackage{hyperref}
\usepackage{url}
\usepackage{algorithm}
\usepackage{algorithmic}
\usepackage{booktabs}
\usepackage{svg} 
\usepackage{wrapfig}

\title{CaRe-BN: Precise Moving Statistics for Stabilizing Spiking Neural Networks in Reinforcement Learning}


\author{Zijie Xu, Xinyu Shi, Yiting Dong, Zihan Huang, Zhaofei Yu\thanks{Corresponding author: yuzf12@pku.edu.cn} \\
Peking University\\
Beijing, 100871, China
}

%

\iclrfinalcopy 
\begin{document}
\maketitle

\begin{abstract}
Spiking Neural Networks (SNNs) offer low-latency and energy-efficient decision-making on neuromorphic hardware by mimicking the event-driven dynamics of biological neurons. However, the discrete and non-differentiable nature of spikes leads to unstable gradient propagation in directly trained SNNs, making Batch Normalization (BN) an important component for stabilizing training. In online Reinforcement Learning (RL), imprecise BN statistics hinder exploitation, resulting in slower convergence and suboptimal policies. While Artificial Neural Networks (ANNs) can often omit BN, SNNs critically depend on it, limiting the adoption of SNNs for energy-efficient control on resource-constrained devices. To overcome this, we propose Confidence-adaptive and Re-calibration Batch Normalization (CaRe-BN), which introduces (\emph{i}) a confidence-guided adaptive update strategy for BN statistics and (\emph{ii}) a re-calibration mechanism to align distributions. By providing more accurate normalization, CaRe-BN stabilizes SNN optimization without disrupting the RL training process. Importantly, CaRe-BN does not alter inference, thus preserving the energy efficiency of SNNs in deployment. Extensive experiments on both discrete and continuous control benchmarks demonstrate that CaRe-BN improves SNN performance by up to $22.6\%$ across different spiking neuron models and RL algorithms. Remarkably, SNNs equipped with CaRe-BN even surpass their ANN counterparts by $5.9\%$. These results highlight a new direction for BN techniques tailored to RL, paving the way for neuromorphic agents that are both efficient and high-performing. Code is available at \url{https://github.com/xuzijie32/CaRe-BN}.
\end{abstract}

\section{Introduction}
Spiking Neural Networks (SNNs) have emerged as a promising class of neural models that more closely mimic the event-driven computation of biological brains \citep{maass1997networks,gerstner2014neuronal}. This event-driven property makes SNNs particularly well suited for deployment on neuromorphic hardware platforms \citep{davies2018loihi,debole2019truenorth}, enabling low-latency and energy-efficient inference. 

In parallel, Reinforcement Learning (RL) has achieved remarkable success across a wide range of domains \citep{mnih2015human,lillicrap2015continuous,haarnoja2018soft}. Among these, complex control tasks have received significant attention due to their alignment with real-world scenarios and their strong connection to embodied AI and robotic applications \citep{robot1,robot2,robot3}. Combining the strengths of SNNs with RL (SNN-RL) offers the potential to train agents that not only learn complex behaviors but also execute them with extremely low energy consumption \citep{SNNadvantage2}. This makes SNN-RL particularly appealing for robotics and autonomous systems deployed on resource-constrained edge devices.

However, training SNNs is challenging. Due to the discrete spike dynamics and the reliance on surrogate gradients to approximate the backward pass, directly trained SNNs often suffer from unstable gradient propagation, including vanishing or exploding gradients \citep{tdBN}. Batch Normalization (BN) \citep{ioffe2015batch} plays a crucial role in stabilizing SNN training by regulating activation statistics and improving gradient flow, mitigates such instability and contributes to state-of-the-art performance \citep{TEBN,TABN}.

While effective in supervised learning, BN suffers a severe breakdown in online RL because moving statistics cannot be estimated precisely under nonstationary dynamics. As shown in Figure \ref{Fig:1}, traditional BN struggles to track the true statistics: When distributions shift rapidly (Figure \ref{Fig:1}(a)), estimates lag behind; when distributions are relatively static (Figure \ref{Fig:1}(b)), estimates contain noise. These inaccuracies lead agents to select suboptimal actions and generate poor trajectories, which are then reused for training—further compounding the problem and hindering policy improvement. 

This issue is especially critical for SNNs. Traditional online RL algorithms usually remove BN layers in their networks \citep{RL1,TD3,SAC1,PPO}. Unlike ANNs that can train stably without BN, SNNs rely heavily on normalization to stabilize membrane potentials and surrogate-gradient backpropagation. Removing BN from SNN-based RL leads to severe gradient instability and substantial performance degradation.

\begin{figure}[t]
\begin{center}
\includegraphics[width=0.99\linewidth]{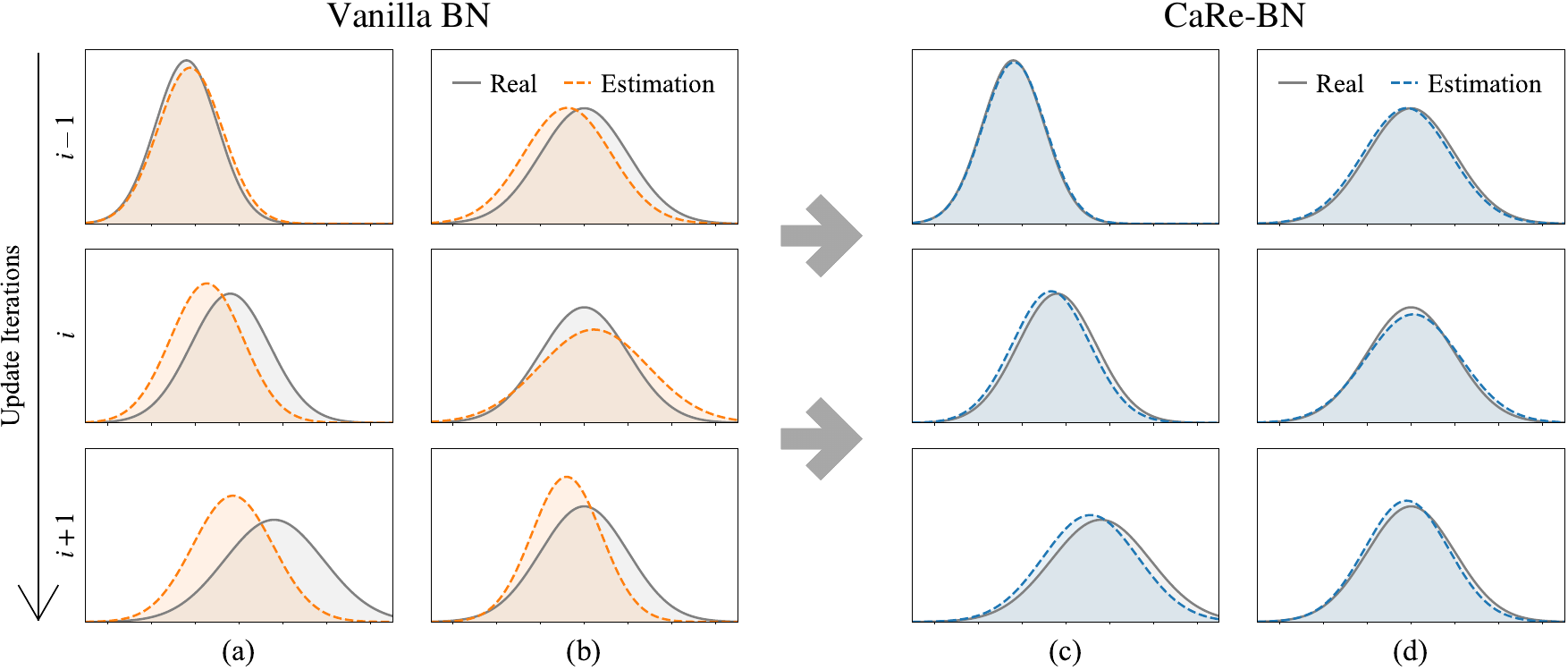}
\caption{Real and estimated input activation distributions in BN layers. Between each gradient update iterations, distributions change rapidly in (a) and (c), while remaining stable in (b) and (d).}
\label{Fig:1}
\end{center}
\end{figure}

In this work, we address this issue by proposing \textbf{Confidence-adaptive and Re-calibration Batch Normalization} (CaRe-BN), a BN strategy tailored for SNN-based RL. CaRe-BN introduces two complementary components: (\emph{i}) \textbf{Confidence-adaptive update} (Ca-BN), a confidence-weighted moving estimator of BN statistics that ensures unbiasedness and optimal variance reduction; and (\emph{ii}) \textbf{Re-calibration} (Re-BN), a periodic correction scheme that leverages replay buffer resampling to refine inference statistics. Together, these mechanisms enable precise, low-variance estimation of BN statistics under the nonstationary dynamics of SNN-RL (Figure \ref{Fig:1}). With more accurate moving statistics, CaRe-BN stabilizes SNN optimization \textbf{without disrupting the online RL process}.

We evaluate CaRe-BN on a variety of control tasks, including the Atari benchmark \citep{ALE1,ALE2} for discrete action spaces and the MuJoCo suite \citep{mujoco1,mujoco2} for continuous control. The results show that CaRe-BN not only resolves the issue of imprecise BN statistics but also accelerates training and achieves state-of-the-art performance. Remarkably, SNN-based agents equipped with CaRe-BN even \textbf{outperform their ANN counterparts by} $\mathbf{5.9\%}$, without requiring complex neuron dynamics or specialized RL frameworks.

\section{Related Works}
\subsection{Batch Normalization in Spiking Neural Networks}
Batch Normalization (BN) was originally proposed for ANNs to mitigate internal covariate shift during training \citep{ioffe2015batch}, thereby accelerating convergence and improving performance \citep{BNreview}. To address unstable training in SNNs, several extensions of BN have been developed \citep{tdBN,TEBN,BNTT,TABN}. While these methods are effective in supervised tasks, they are designed under the assumption of static distributions. This assumption is violated in online RL, where distributions shift continually as the agent interacts with the environment, making these BN variants ill-suited for SNN-RL. 

\subsection{Spiking Neural Networks in Reinforcement Learning}
Early work in SNN-RL primarily relied on synaptic plasticity rules, particularly reward-modulated Spike-Timing-Dependent Plasticity (R-STDP) and its variants \citep{R-STDP,R-STDP_3factor,R-STDP_eligibity_trace,continuousAC,SVPG}. Another research direction focused on ANN-to-SNN conversion: while \citet{DQN2ANN1,DQN2ANN2,DQN_2ANN_BP} converted Deep Q-Networks (DQNs) \citep{Atari,mnih2015human} into SNNs for discrete control, recent work by \citet{xu2026error} has greatly reduced conversion errors in continuous control. To enable direct gradient-based training, \citet{DQN_BP1,DQN_BP2,BP_DQN_AC,sun2022solving} applied Spatio-Temporal Backpropagation (STBP) \citep{STBP} to train DQNs, while \citet{learningd_deliema} introduced e-prop with eligibility traces to train policy networks using policy gradient methods \citep{policy_gradient}. 

For continuous control tasks, hybrid frameworks have been extensively explored \citep{SDDPG,popSAN,MDC_SAN,ILC_SAN,BPT_SAN,dynamic_threshold,noisySAN}. These approaches typically employ a Spiking Actor Network (SAN) co-trained with a deep ANN critic in the Actor–Critic framework \citep{Actor_Critic}. Empowered by the recent proxy target framework \citep{xu2025proxy}, simple SNNs are now capable of outperforming their ANN counterparts. However, none of these methods address the challenge of normalization in SNN-based RL. The absence of proper normalization often leads to unstable updates and slower convergence.

\section{Preliminaries}
\subsection{Spiking Neural Networks}
\label{Sec:snn}
Spiking Neural Networks (SNNs) communicate through discrete spikes rather than continuous activations. The most widely used neuron model is the Leaky Integrate-and-Fire (LIF) neuron, whose membrane potential dynamics are described as:
\begin{equation}
    H_t = \lambda V_{t-1} + C_t,\qquad 
    S_t = \Theta(H_t - V_{th}),\qquad 
    V_t = (1-S_t) \cdot H_t + S_t \cdot V_{\rm reset},
    \label{Eq:SNN}
\end{equation}
where $C_t$, $H_t$, $S_t$, and $V_t$ denote the input current, the accumulated membrane potential, the binary output spike, and the post-firing membrane potential at time step $t$, respectively. The parameters $V_{th}$, $V_{\rm reset}$, and $\lambda$ represent the firing threshold, reset voltage, and leakage factor, respectively. $\Theta(\cdot)$ is the Heaviside step function.

\subsection{Reinforcement Learning}
RL is a framework in which an agent learns to maximize cumulative rewards by interacting with an environment. The agent maps states (or observations) to actions, with the learning loop consisting of two steps: \textbf{(\emph{i})} the agent selects an action, receives a reward, and transitions to the next state; and \textbf{(\emph{ii})} the agent updates its policy by sampling mini-batches of past experiences. 

Because the policy continuously evolves during training, the data distribution is inherently non-stationary. This poses challenges for batch normalization methods, which rely on the assumption of a stationary distribution.

\subsection{Batch Normalization}
Batch Normalization (BN) \citep{ioffe2015batch} is a widely used technique to stabilize and accelerate the training of deep neural networks. Given an activation $x_i \in \mathbb{R}^d$ at iteration $i$, BN normalizes it using the mean and variance computed over a mini-batch $\mathcal{B}=\{x_i^1,\dots,x_i^N\}$:
\begin{equation}
    \mu_\mathcal{B} = \frac{1}{N} \sum_{j=1}^N x_i^j, 
    \quad
    \sigma_\mathcal{B}^2 = \frac{1}{N} \sum_{j=1}^N (x_i^j - \mu_\mathcal{B})^2,
\end{equation}
\begin{equation}
    \hat{x}_i = \frac{x_i - \mu_\mathcal{B}}{\sqrt{\sigma_\mathcal{B}^2 + \epsilon}}, 
    \quad
    y_i = \gamma \hat{x}_i + \beta,
\end{equation}
where $\epsilon$ is a small constant for numerical stability, and $\gamma, \beta$ are learnable affine parameters. During inference, moving statistics $(\hat{\mu}_i, \hat{\sigma}_i^2)$ are used in place of batch statistics $(\mu_i, \sigma_i^2)$. 

In supervised learning, this discrepancy between training (mini-batch statistics) and inference (moving statistics) is usually tolerable, as imprecise moving estimates do not directly affect gradient updates. However, in online RL, inaccurate moving statistics degrade policy exploitation, leading to unstable training dynamics and even divergence.

\section{Methodology}

\begin{figure}[h]
\begin{center}
\includegraphics[width=0.99\linewidth]{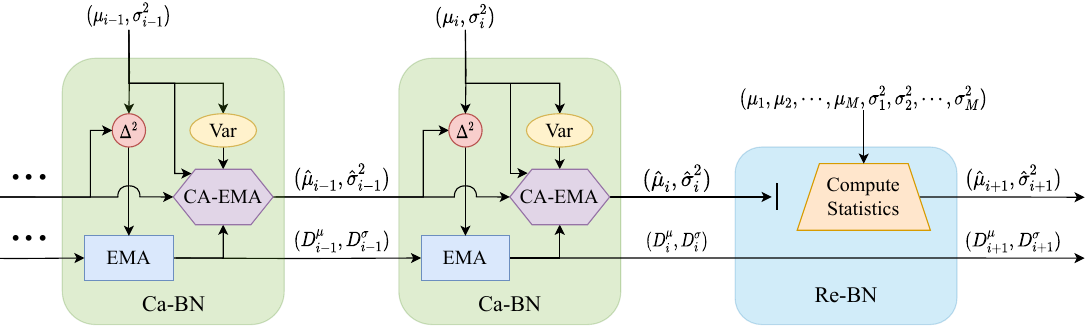}
\caption{The statistics estimation scheme of CaRe-BN. In this framework, Ca-BN is applied at every update step, while Re-BN is performed periodically. $\Delta^2$ denotes the squared error, \textbf{Var} represents the variance computed according to Eq.~\ref{Eq:var-approx}, \textbf{EMA} refers to the exponential moving average in Eq.~\ref{Eq:D-update}, and \textbf{CA-EMA} denotes the confidence-adaptive update defined in Eqs.~\ref{Eq:mean_update} and \ref{Eq:var_update}.}
\label{Fig:2}
\end{center}
\end{figure}

As illustrated in Figure~\ref{Fig:2}, we propose Confidence-adaptive and Recalibration Batch Normalization (CaRe-BN) to address the challenge of approximating moving statistics in online RL. Section~\ref{Sec:4.1} analyzes the limitations of traditional BN in online RL, where statistics are often estimated imprecisely. Section~\ref{Sec:Ca} introduces the confidence-adaptive update mechanism (Ca-BN), which dynamically adjusts statistics estimation based on the reliability of the current approximation. Section~\ref{Sec:Re} presents the recalibration mechanism (Re-BN), which periodically corrects accumulated estimation errors. Finally, Section~\ref{Sec:4.4} integrates these components into the full CaRe-BN framework and demonstrates its use in online RL algorithms.

\subsection{Issues in Approximating Moving Statistics}
\label{Sec:4.1}

\textbf{Online RL introduces stronger distribution shifts.}  
Unlike supervised learning, where the data distribution is typically assumed to be static, online RL involves continuous interaction between the agent and the environment. This results in a non-stationary data distribution, which in turn causes activation statistics to drift over time.

\textbf{Inaccurate statistics degrade RL performance.}  
Supervised learning only requires the final moving statistics to be accurate, as inference is performed after training. In contrast, online RL requires reliable statistics throughout training. When statistics are imprecise, the agent selects suboptimal actions during exploration and exploitation, generating poor trajectories that further degrades policy updates.

The key of the problem lies in accurately estimating inference-time statistics under shifting distributions. Hence, it is essential to design estimators that adapt to distributional changes while minimizing approximation error during training.

It is worth noting that most conventional ANN-based RL algorithms do not employ BN \citep{DDPG,RL1}, as shallow ANNs can often learn stable representations without normalization. In contrast, BN is indispensable for stabilizing SNNs training. Therefore, addressing this issue is particularly critical for SNN-based RL.

\subsection{Confidence-adaptive Update of BN Statistics (Ca-BN)}
\label{Sec:Ca}
\newtheorem{theorem}{Theorem}
\newtheorem{assump}{Assumption}
\newtheorem{proof}{Proof}
Conventional BN approximates population statistics using an exponential moving average (EMA) of the batch mean and variance:
\begin{equation}
    \hat{\mu}_{i} \gets (1-\alpha)\hat{\mu}_{i-1} + \alpha \mu_i, 
    \quad
    \hat{\sigma}^2_{i} \gets (1-\alpha)\hat{\sigma}^2_{i-1} + \alpha \sigma_i^2,
\end{equation}
where $\alpha$ is the momentum parameter. This update rule faces a fundamental \textbf{noise–delay trade-off}. As shown in Figure~\ref{Fig:1}, low momentum yields stable but slow adaptation to distribution shifts, while high momentum adapts quickly but amplifies the noise from small-batch estimates. This trade-off is particularly harmful in online RL, where accurate normalization is critical for stable policy learning.

Inspired by the Kalman estimator \citep{kalman1960new}, we derive a confidence-guided mechanism that adaptively reweights estimators to minimize the mean-squared error (MSE) of BN statistics.
\begin{theorem}
\label{thm:best_est}
Let $(\mu_i,\sigma_i^2)$ and $(\hat\mu_{i\mid i-1},\hat\sigma^2_{i\mid i-1})$ be two unbiased estimators of the population parameters $(\mu_i^*,{\sigma_i^*}^2)$. Taking them as random variables, the optimal linear estimator is
\begin{align}
\hat\mu_i &= (1-K^\mu_i)\hat\mu_{i\mid i-1} + K^\mu_i \mu_i, 
& K^\mu_i &= \frac{\mathbb{D}(\mu_i^*-\hat\mu_{i\mid i-1})}{\mathbb{D}(\mu_i^*-\hat\mu_{i\mid i-1})+\mathbb{D}(\mu_i^*-\mu_i)}, \label{Eq:mean_update}\\
\hat\sigma_i^2 &= (1-K^\sigma_i)\hat\sigma^2_{i\mid i-1} + K^\sigma_i \sigma_i^2, 
& K^\sigma_i &= \frac{\mathbb{D}({\sigma_i^*}^2-\hat\sigma^2_{i\mid i-1})}{\mathbb{D}({\sigma_i^*}^2-\hat\sigma^2_{i\mid i-1})+\mathbb{D}({\sigma_i^*}^2-\sigma_i^2)}, \label{Eq:var_update}
\end{align}
where $K^\mu_i$ and $K^\sigma_i$ are confidence-guided adaptive weights, and $\mathbb{D}(\cdot)$ denotes generalized variance\footnote{The confidence is defined as the inverse of the generalized variance: $\text{confidence score}=\frac{1}{\mathbb{D}}$.}.
\end{theorem}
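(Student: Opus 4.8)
The plan is to treat the claim as the classical problem of optimally fusing two unbiased estimators, exploiting the fact that for an unbiased estimator the mean-squared error coincides with its variance. First I would consider the general affine combination $\hat\mu_i = a\,\hat\mu_{i\mid i-1} + b\,\mu_i$ and impose unbiasedness: since $\mathbb{E}[\hat\mu_{i\mid i-1}] = \mathbb{E}[\mu_i] = \mu_i^*$, demanding $\mathbb{E}[\hat\mu_i] = \mu_i^*$ forces $a+b=1$. This justifies the convex parametrization $a = 1-K^\mu_i$, $b = K^\mu_i$ used in the statement and reduces the problem to a single scalar degree of freedom.

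Next I would write the combined error as $\hat\mu_i - \mu_i^* = -(1-K^\mu_i)(\mu_i^* - \hat\mu_{i\mid i-1}) - K^\mu_i(\mu_i^* - \mu_i)$ and expand its second moment. Denoting the two error variances by $P_1 = \mathbb{D}(\mu_i^* - \hat\mu_{i\mid i-1})$ and $P_2 = \mathbb{D}(\mu_i^* - \mu_i)$, the MSE becomes $(1-K^\mu_i)^2 P_1 + 2K^\mu_i(1-K^\mu_i)\,\mathrm{Cov} + (K^\mu_i)^2 P_2$, where $\mathrm{Cov}$ is the covariance of the two errors (each has zero mean by unbiasedness, so second moments equal variances). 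The decisive step is to assume that the prior estimate $\hat\mu_{i\mid i-1}$ and the fresh batch estimate $\mu_i$ have uncorrelated errors—the standard independence hypothesis underlying the Kalman update—so that the cross term vanishes and the objective collapses to the convex quadratic $(1-K^\mu_i)^2 P_1 + (K^\mu_i)^2 P_2$.

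Finally I would minimize this quadratic: setting its derivative $-2(1-K^\mu_i)P_1 + 2K^\mu_i P_2$ to zero yields $K^\mu_i = P_1/(P_1+P_2)$, which is exactly Eq.~\ref{Eq:mean_update}, while the positive second derivative $2(P_1+P_2)$ certifies a global minimum. The identical argument applied to the unbiased variance estimators $\hat\sigma^2_{i\mid i-1}$ and $\sigma_i^2$, with error variances $\mathbb{D}({\sigma_i^*}^2 - \hat\sigma^2_{i\mid i-1})$ and $\mathbb{D}({\sigma_i^*}^2 - \sigma_i^2)$, gives Eq.~\ref{Eq:var_update}. I expect the main obstacle to be the uncorrelatedness assumption: it is exact for the idealized Kalman recursion but only approximate here, since the running estimate $\hat\mu_{i\mid i-1}$ is aggregated from past batches that are statistically linked to $\mu_i$ through the slowly drifting distribution. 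I would therefore state this as an explicit modeling assumption rather than attempt to derive it, and note that the resulting gains remain well-posed whenever both variances are finite and not simultaneously zero.
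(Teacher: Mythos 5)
Your proposal is correct and takes essentially the same route as the paper: parametrize the unbiased linear fusion by a single gain $K$, express the MSE as a quadratic in $K$, and minimize to obtain $K=P_1/(P_1+P_2)$. In fact your write-up is slightly more careful than the paper's own proof, which silently drops the error cross-covariance when writing $\mathbb{D}(\tilde\mu_i-\mu_i^*)=(1-K)^2\,\mathbb{D}(\hat\mu_{i\mid i-1}-\mu_i^*)+K^2\,\mathbb{D}(\mu_i-\mu_i^*)$; you correctly identify that this step requires the uncorrelated-errors hypothesis (and that it is only approximately true here, since the running estimate is built from past batches coupled to $\mu_i$ through the drifting distribution), and you state it as an explicit assumption rather than leaving it implicit.
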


\begin{proof}
Since both $\hat\mu_{i\mid i-1}$ and $\mu_i$ are unbiased for $\mu_i^*$, any linear combination 
$\tilde\mu_i=(1-K)\hat\mu_{i\mid i-1}+K\mu_i$ 
is also unbiased. The variance is
\begin{equation}
   \mathbb{D}(\tilde\mu_i-\mu_i^*) = (1-K)^2\cdot\mathbb{D}(\hat\mu_{i\mid i-1}-\mu_i^*) + K^2\cdot\mathbb{D}(\mu_i-\mu_i^*).
\end{equation}
Minimizing over $K$ yields the optimal $K=K^\mu_i$. The variance update (Eq.~\ref{Eq:var_update}) follows analogously. 
\end{proof}

\begin{assump}
The activations in iteration $i$ are modeled as $x_i \sim \mathcal{N}(\mu_i^*,{\sigma_i^*}^2)$, following the standard Gaussianity assumption in BN. 
\end{assump}

\textbf{Confidence of mini-batch statistics.}
For a batch of size $N$, the sample mean $\mu_i$ and variance $\sigma_i^2$ satisfy
\begin{equation}
    \mu_i \sim \mathcal{N}\!\left(\mu_i^*, \tfrac{{\sigma_i^*}^2}{N}\right), 
\qquad 
\frac{(N-1)\sigma_i^2}{{\sigma_i^*}^2}\sim \chi^2_{N-1}.
\end{equation}
\textcolor{black}{Since $\mu^*_i$ and ${\sigma_i^*}^2$ are unknown, we adopt the common approximation using $\mu_i$ and $\sigma_i^2$, thus:}
\begin{equation}
    \mathbb{D}(\mu_i^*-\mu_i)=\frac{{\sigma_i^*}^2}{N} \approx \frac{\sigma_i^2}{N}, 
\qquad 
\mathbb{D}({\sigma_i^*}^2-\sigma_i^2)=\frac{2{\sigma_i^*}^4}{N-1} \approx \frac{2\sigma_i^4}{N-1}.
\label{Eq:var-approx}
\end{equation}

\textbf{Confidence of previous estimates.}
\color{black}
Since the true statistics $\mu_i^*$ and ${\sigma_i^*}^2$ are unknown, direct computation of $\mathbb{D}(\mu_i^* - \hat{\mu}_{i\mid i-1})$ and 
$\mathbb{D}({\sigma_i^*}^2 - \hat{\sigma}^2_{i\mid i-1})$ is infeasible. To approximate them, we view the minibatch statistics $\mu_i$ and $\sigma_i^2$ as a stochastic sample drawn from the unknown hypothetical distributions induced by $\mu_i^*$ and ${\sigma_i^*}^2$. Thus, the squared deviations $(\mu_i - \hat{\mu}_{i\mid i-1}))^2$ and $(\sigma_i^2 - \hat{\sigma}^2_{i\mid i-1}))^2$ serve as unbiased but noisy probes of $\mathbb{D}(\mu_i^* - \hat{\mu}_{i\mid i-1})$ and $\mathbb{D}({\sigma_i^*}^2 - \hat{\sigma}^2_{i\mid i-1})$.

Because these single-minibatch estimates exhibit high variance, we maintain smoothed recursive estimators updated using an exponential moving average with momentum parameter $\alpha$:
\color{black}
\begin{equation}
    \mathbb{D}(\mu_i^*-\hat\mu_{i\mid i-1}) \approx D_i^\mu, 
\qquad 
\mathbb{D}({\sigma_i^*}^2-\hat\sigma^2_{i\mid i-1}) \approx D_i^\sigma,
\label{Eq:D-approx}
\end{equation}
\begin{equation}
    D^\mu_i \gets (1-\alpha)D^\mu_{i-1} + \alpha(\mu_i-\hat{\mu}_{i\mid i-1}))^2, 
\qquad 
D^\sigma_i \gets (1-\alpha)D^\sigma_{i-1} + \alpha(\sigma_i^2-\hat{\sigma}^2_{i\mid i-1}))^2.
\label{Eq:D-update}
\end{equation}

Combining Eqs.~\ref{Eq:mean_update}–\ref{Eq:D-update}, we obtain the confidence-adaptive update scheme\footnote{\textcolor{black}{As BN statistics fluctuate without monotonic trends, we define $\hat{\mu}_{i\mid i-1}=\mu_{i-1}$ and $\hat{\sigma}^2_{i\mid i-1}=\sigma^2_{i-1}$.}}. When distributional shifts are rapid, $D_i^\mu$ and $D_i^\sigma$ grow large, increasing $K^\mu_i$ and $K^\sigma_i$ and accelerating adaptation. Conversely, when statistics are stable, these terms shrink, lowering $K^\mu_i$ and $K^\sigma_i$ and reducing noise from small mini-batches.

\subsection{Re-calibration Mechanism of BN Statistics (Re-BN)}
\label{Sec:Re}

While the confidence-adaptive update provides online estimates of BN statistics during training, these estimates may still drift from the true population values due to stochastic mini-batch noise. The most accurate approach would be to recompute exact statistics by forward-propagating the entire dataset after each update \citep{wu2021rethinking}. However, this is computationally infeasible in RL, as it would require processing millions of samples at every step. 

A more practical alternative is to periodically re-calibrate BN statistics using larger aggregated batches. Specifically, at fixed intervals $T_{\text{cal}}$, we draw $M$ calibration batches $\{ \mathcal{B}_1, \ldots, \mathcal{B}_M \}$ from the replay buffer. For each batch $\mathcal{B}_j$, we compute its mean $\mu_j$ and variance $\sigma_j^2$. The recalibrated BN statistics are then given by:
\begin{equation}
\hat\mu_i = \frac{1}{M}\sum_{j=1}^M \mu_j, 
\qquad 
\hat\sigma^2_i = \frac{1}{M}\sum_{j=1}^M (\sigma_j^2+\mu_j^2) - \hat\mu^2_i. 
\label{Eq:re}
\end{equation}

This recalibration requires additional forward passes, but the extra overhead is upper bounded by $\tfrac{M}{T_{\text{cal}}}$ times the total training cost. Since we set $T_{\text{cal}} \gg M$, the computational overhead remains negligible, while significantly improving the accuracy of BN statistics.

\subsection{Integrating with RL}
\label{Sec:4.4}
The proposed Confidence-adaptive and Re-calibration Batch Normalization (CaRe-BN) integrates two complementary mechanisms: the confidence-adaptive update in Section~\ref{Sec:Ca}, which provides an online estimation of batch normalization (BN) statistics, and the re-calibration procedure in Section~\ref{Sec:Re}, which corrects accumulated bias. The overall integration within an online RL framework is outlined in Algorithm~\ref{Algo:CARE}.

\begin{algorithm}
    \caption{General RL Algorithm with CaRe-BN}
    \label{Algo:CARE}
    \begin{algorithmic}[1]
        \STATE Initialize the agent networks and the replay buffer.
        \FOR{each iteration}
            \STATE Select an action and store the transition (\textbf{inference BN statistics}).
            \STATE Update the agent by sampling a minibatch of $N$ transitions (\textbf{mini-batch BN statistics}).
            \STATE Update the moving BN statistics as:
            $$
            D^\mu_i \gets (1-\alpha)D^\mu_{i-1} + \alpha(\mu_i-\hat\mu_{i-1})^2, \qquad 
            D^\sigma_i \gets (1-\alpha)D^\sigma_{i-1} + \alpha(\sigma_i^2-\hat\sigma^2_{i-1})^2,
            $$
            $$
            \hat\mu_i =  \frac{D^\mu_i\cdot \mu_i+\tfrac{\sigma_i^2}{N} \cdot \hat\mu_{i-1}}{D^\mu_i+\tfrac{\sigma_i^2}{N}}, \qquad
            \hat\sigma^2_i = \frac{D^\sigma_i\cdot \sigma^2_i+\tfrac{2\sigma_i^4}{N-1} \cdot \hat\sigma^2_{i-1}}{D^\sigma_i+\tfrac{2\sigma_i^4}{N-1}}.
            $$
            \IF{Re-calibration}
                \STATE Sample $M$ minibatches of $N$ transitions each and update BN statistics using Eq.~(\ref{Eq:re}).
            \ENDIF
        \ENDFOR
    \end{algorithmic}
\end{algorithm}

It is important to note that the inference procedure of CaRe-BN remains identical to that of conventional BN. At inference time, the CaRe-BN layer is seamlessly fused into synaptic weights, introducing no additional inference overhead during deployment.

\section{Experiments}

\subsection{Experimental Setup}

\textcolor{black}{We evaluate CaRe-BN on RL tasks covering both discrete and continuous action spaces.} All environments use default settings, and performance is evaluated by averaging the rewards in $10$ trials.

\textcolor{black}{For discrete action spaces, we consider four widely used Atari~2600 games from the Arcade Learning Environment (ALE) \citep{ALE1,ALE2}: \textit{Pong}, \textit{Breakout}, \textit{SpaceInvaders}, \textit{Freeway}, and \textit{Seaquest}. We adopt a deep Q-learning framework \citep{mnih2015human} and train a deep Spiking Q-Network \citep{DQN_BP1} that receives RAM-based observations and outputs state-action values.}

For continuous control, we evaluate on five standard MuJoCo benchmarks \citep{mujoco1,mujoco2} provided in the OpenAI Gymnasium suite \citep{gym,gymnasium}: \textit{InvertedDoublePendulum} (IDP) \citep{IDP}, \textit{Ant} \citep{GAE}, \textit{HalfCheetah} \citep{halfcheetah}, \textit{Hopper} \citep{hopper}, and \textit{Walker2d}. \textcolor{black}{We employ a hybrid framework in which a spiking actor network is co-trained with a deep critic network} using several RL algorithms, including Deep Deterministic Policy Gradient (DDPG) \citep{DDPG}, Twin Delayed DDPG (TD3) \citep{TD3}, \textcolor{black}{and Soft Actor-Critic (SAC) \citep{SAC2}}.

To evaluate the generality of CaRe-BN, we experiment with multiple spiking neuron models: the Leaky Integrate-and-Fire (LIF) neuron \citep{LIF}, the Current-based LIF (CLIF) neuron \citep{popSAN}, and the Dynamic Neuron (DN) model \citep{MDC_SAN}, with detailed dynamics provided in the Appendix. \textcolor{black}{All SNN agents are trained via Spatio-Temporal Backpropagation (STBP) \citep{STBP}, with the CaRe-BN module inserted between every pair of adjacent layers.} For fair comparison, all models share the same hyperparameters, fully listed in the Appendix.

\textcolor{black}{During each RL environment step, the SNN agent performs a single forward inference composed of $5$ simulation time steps, after which all neuron states are reset.}

\subsection{More Precise BN Statistics Lead to Better Exploration}
\color{black}
\begin{wrapfigure}{r}{0.59\linewidth}
\vspace{-5pt}
\begin{center}
\includegraphics[width=0.983\linewidth]{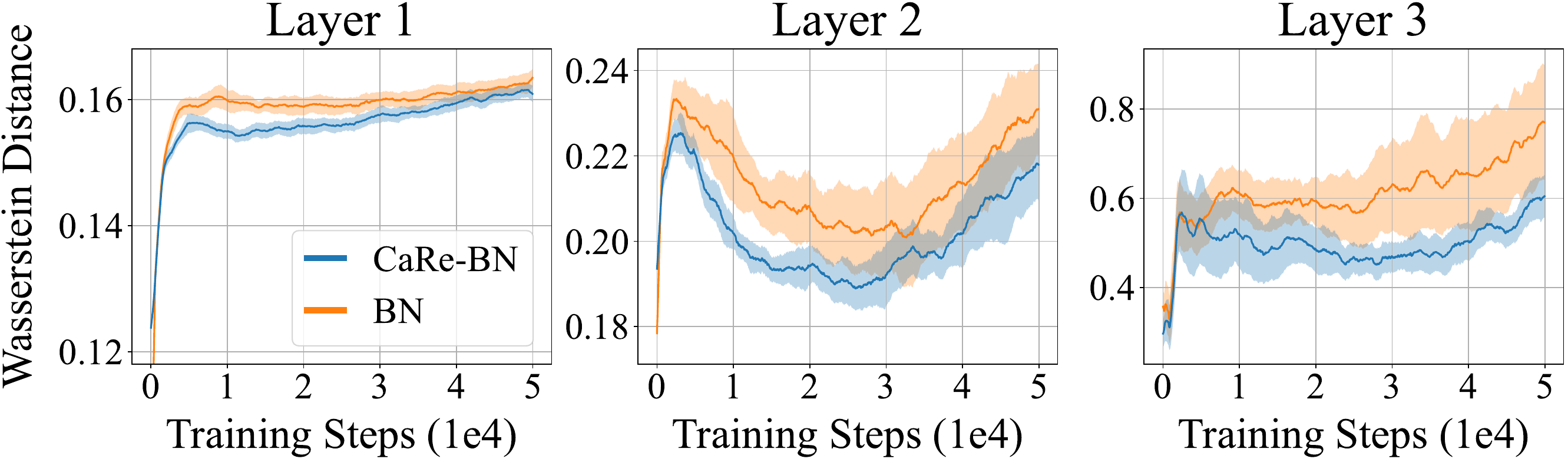}
\caption{Wasserstein distance between estimated BN statistics and the true distribution across layers, measured with CLIF neurons and the TD3 algorithm in the InvertedDoublePendulum-v4 environment. Shaded areas denote half a standard deviation over five runs. Curves are uniformly smoothed for visual clarity.}
\label{Fig:Wasserstein}
\end{center}
\end{wrapfigure}
In online RL, the quality of exploration directly affects subsequent policy updates. As discussed in Section~\ref{Sec:4.1}, traditional BN methods struggle to maintain accurate moving statistics, which can lead to suboptimal exploration behavior.

To quantify this effect, we compute the Wasserstein distance between the true feature distribution and the Gaussian distribution estimated by BN. Figure~\ref{Fig:Wasserstein} shows that CaRe-BN consistently reduces this discrepancy across all layers throughout training, producing more precise normalization.
\color{black}

\begin{figure}[h]
\begin{center}
\includegraphics[width=0.99\linewidth]{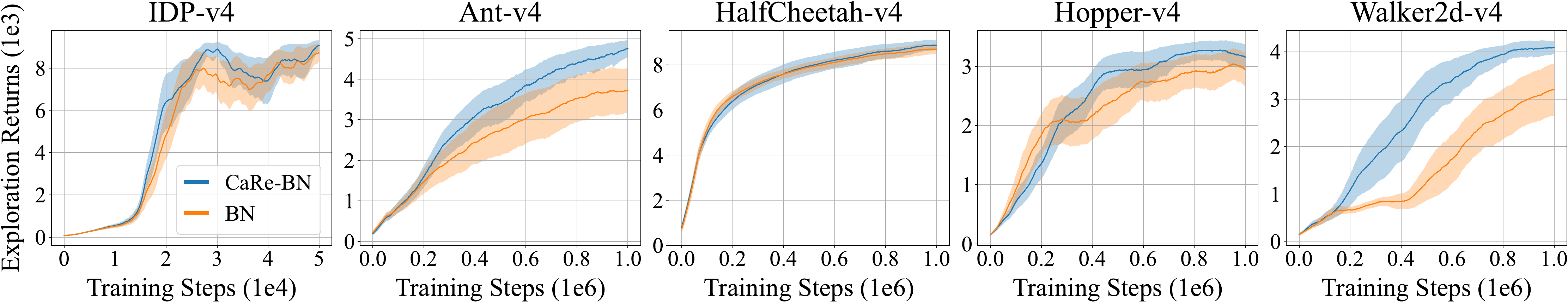}
\caption{Exploration returns of BN and CaRe-BN with CLIF neurons and the TD3 algorithm across five MuJoCo tasks. Shaded areas represent half a standard deviation across five random seeds. Curves are uniformly smoothed for visual clarity.}
\label{Fig:explore}
\end{center}
\end{figure}
The impact of improved statistics is reflected in exploration performance. As shown in Figure~\ref{Fig:explore}, CaRe-BN consistently achieves higher exploration returns. Since CaRe-BN does not directly modify the gradient update process, the observed improvement in exploration performance is solely due to its more precise estimation of BN statistics. This leads to better exploration policies, which in turn generate higher-quality trajectories for updating the agent. As a result, CaRe-BN forms a positive feedback loop: improved statistics $\rightarrow$ better exploration $\rightarrow$ higher-quality experiences $\rightarrow$ better policy.

\subsection{Adaptability of CaRe-BN}
To evaluate the adaptability of CaRe-BN, we test it across different RL algorithms (\textcolor{black}{DQN \citep{mnih2015human}}, DDPG \citep{DDPG}, TD3 \citep{TD3}, \textcolor{black}{and SAC\footnote{\textcolor{black}{Curves with SAC are shown in Figure~\ref{fig:SAC} in the Appendix.}} \citep{SAC2}}) and spiking neuron models (LIF, CLIF \citep{popSAN}, \textcolor{black}{and DN \citep{MDC_SAN}}).

\begin{figure}[htbp]
\begin{center}
\includegraphics[width=0.99\linewidth]{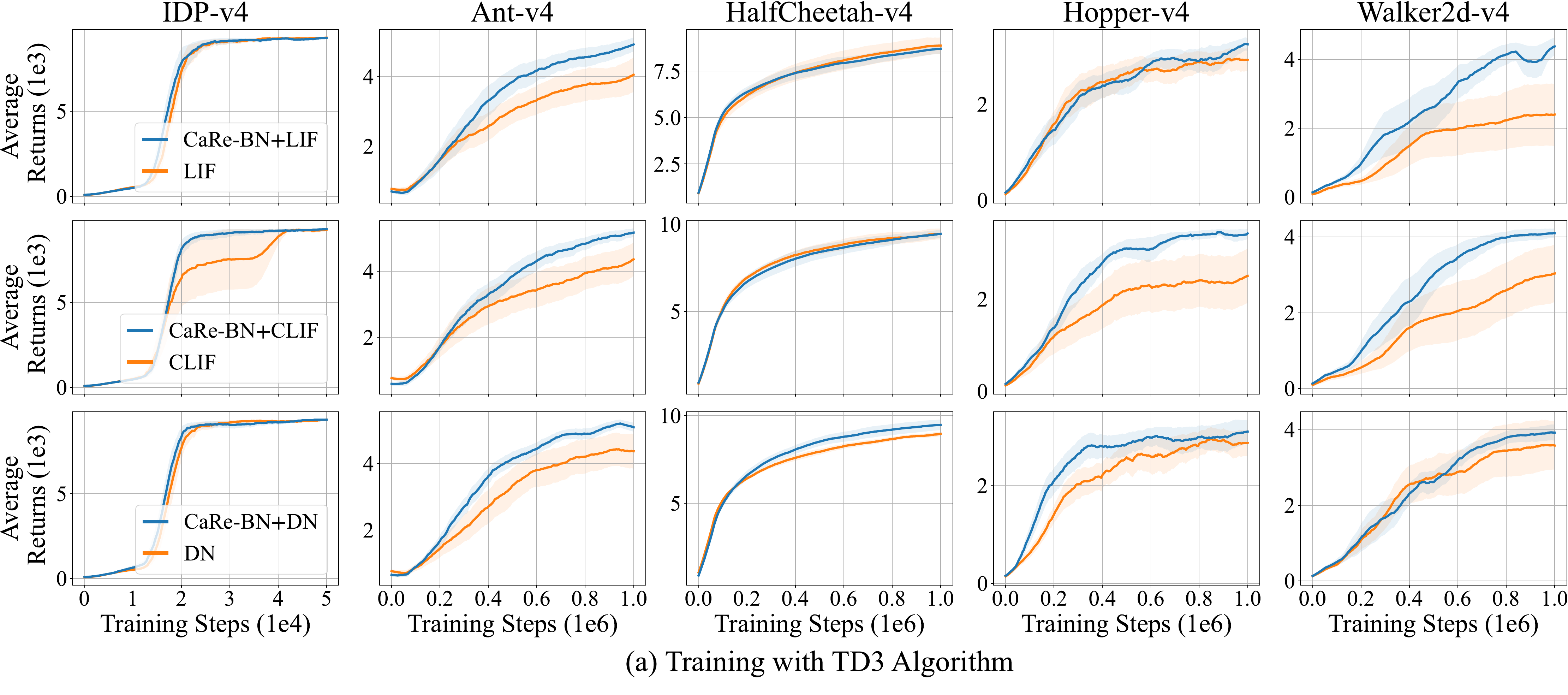}\\
\includegraphics[width=0.99\linewidth]{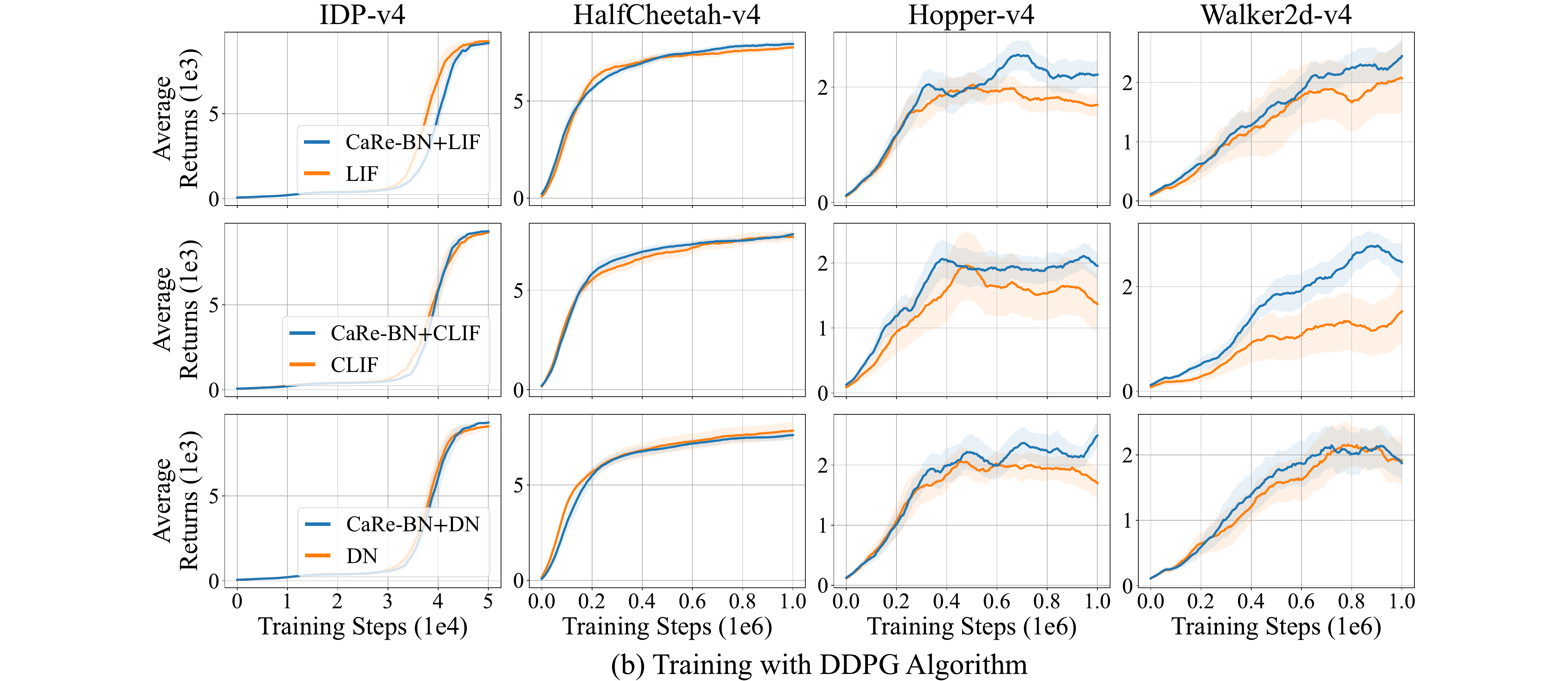}
\caption{Learning curves of SNN-based agents \textcolor{black}{in continuous control} trained with TD3 (top) and DDPG (bottom). Since the DDPG algorithm (in both ANN and SNN) diverges in the Ant-v4 environment, these curves are not shown. Shaded areas represent half a standard deviation across five random seeds. Curves are uniformly smoothed for visual clarity.}
\label{Fig:main_curve}
\end{center}
\end{figure}
\textbf{Better final return.}  
Figure~\ref{Fig:main_curve} shows the learning curves for SNN models with and without CaRe-BN. In most cases, CaRe-BN consistently outperforms standard SNNs, converging faster and achieving higher final returns. These improvements are robust across different spiking neurons and RL algorithms, confirming that CaRe-BN enhances performance in diverse settings.

\begin{figure}[htbp]
\begin{center}
\includegraphics[width=0.99\linewidth]{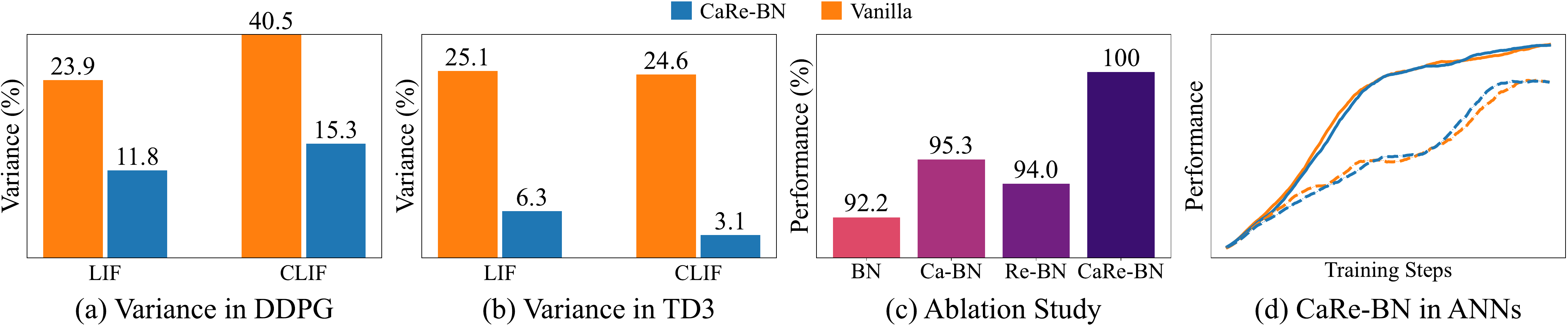}
\caption{(a), (b) Relative variance percentage of final policy returns, computed by averaging the standard deviation ratio across five random seeds, for all environments. (c) Normalized maximum performance across all environments for the ablation study, using CLIF neurons and TD3 algorithm. (d) Normalized learning curves across all environments for ANNs implementing CaRe-BN. The dashed lines represent DDPG and the solid lines represent TD3. Performance and training steps are normalized linearly. Curves are uniformly smoothed for visual clarity.}
\label{Fig:var-ablation}
\end{center}
\end{figure}
\textbf{Lower variance.}  
Figure~\ref{Fig:var-ablation} (a) and (b) display the relative variance of the final policy. Compared to standard SNNs, CaRe-BN significantly reduces the variance of SNN-RL training, and even achieves lower variance than ANN baselines (i.e., $17.71\%$ for DDPG and $21.24\%$ for TD3). This indicates that CaRe-BN not only enhances performance but also improves the stability and reproducibility.
\begin{figure}[htbp]
\begin{center}
\color{black}
\includegraphics[width=0.99\linewidth]{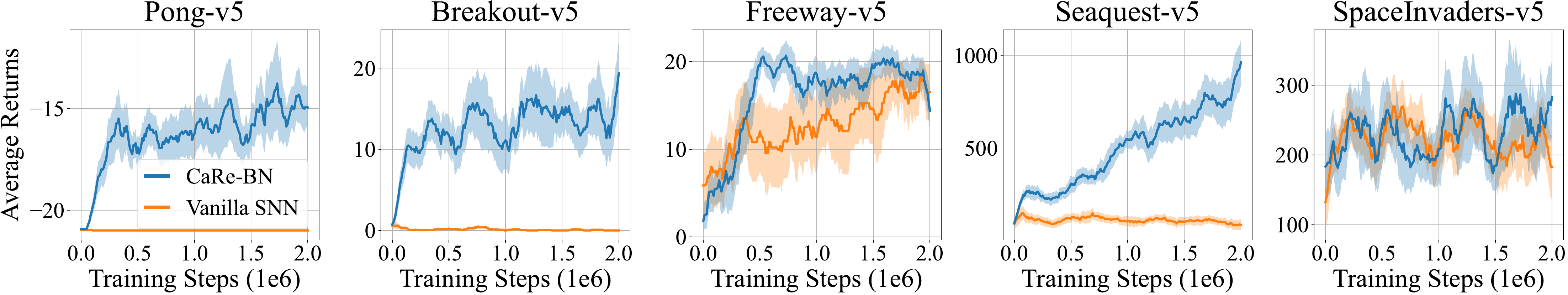}
\caption{Learning curves of SNN-based agents in discrete control. Shaded areas represent half a standard deviation across three random seeds. Curves are uniformly smoothed for visual clarity.}
\label{Fig:DSQN}
\end{center}
\end{figure}
\color{black}

\textbf{Generalizing across different RL domains.}
Beyond continuous control, we also evaluate CaRe-BN in discrete-action settings using the deep spiking Q-network. As shown in Figure~\ref{Fig:DSQN}, SNN agents equipped with CaRe-BN achieve markedly improved performance across Atari tasks. These results demonstrate the strong generalization capability of CaRe-BN across diverse RL domains.
\color{black}
\subsection{Exceeding SOTA}

To further validate the effectiveness of CaRe-BN, we compare it with existing state-of-the-art (SOTA) SNN-RL methods and various batch normalization strategies for SNNs. The evaluation is conducted using the TD3 algorithm \citep{TD3} (a strong SOTA baseline for continuous control) and the CLIF neuron model \citep{popSAN} (the most commonly used neuron type in recent SNN-RL studies). The ANN-SNN conversion baseline follows the SOTA method proposed in \citet{bu2025inference}. For direct-trained SNNs, we include pop-SAN \citep{popSAN}, MDC-SAN \citep{MDC_SAN}, and ILC-SAN \citep{ILC_SAN}. Additionally, we test several BN algorithms for SNNs, including tdBN \citep{tdBN}, BNTT \citep{BNTT}, TEBN \citep{TEBN}, and TABN \citep{TABN}. The performance is summarized in Table~\ref{Tab:main}, where the average performance gain (APG) is defined as:
\begin{equation}
    \label{eq:APR}
    APG=\left(\frac{1}{|\text{envs}|}\sum_{\text{env}\in \text{envs}}\frac{\text{performance}(\text{env})}{\text{baseline}(\text{env})}-1\right)\cdot100\%,
\end{equation}
where $|\text{envs}|$ denotes the total number of environments, and $\text{performance}(\text{env})$ and $\text{baseline}(\text{env})$ represent the performance of the evaluated algorithm and the ANN baseline in each environment, respectively.

\textbf{Compared with other SNN-RL methods:}  
CaRe-BN significantly outperforms previous SNN-RL approaches, demonstrating that normalization plays a more crucial role than architectural modifications in improving SNN-RL performance. 

\textbf{Compared with other BN methods:}  
Compared to existing SNN-specific BN variants, CaRe-BN performs superior, establishing a new state-of-the-art normalization strategy for SNN-RL.

\textbf{Compared with ANNs:}  
\begin{table}[htbp]
  \caption{Max average returns over $5$ random seeds with CLIF spiking neurons, and the average performance gain (APG) against ANN baseline, where $\pm$ denotes one standard deviation. \textcolor{black}{All modules are trained using the TD3 algorithm. All directly trained SNN modules have $5$ simulation time steps.}}
  \label{Tab:main}
  \centering
  \resizebox{0.99\textwidth}{!}{
  \begin{tabular}{lcccccc}
    \toprule
    Method&IDP-v4&Ant-v4&HalfCheetah-v4&Hopper-v4&Walker2d-v4& APG\\ \midrule
      ANN & $7503 \pm 3713$& $4770 \pm 1014$&$10857 \pm 475$& $3410 \pm 164$& $4340 \pm 383$& $0.00\%$\\
 ANN-SNN& $3859 \pm 4440$& $3550 \pm 963$& $8703 \pm 658$& $3098 \pm 281$& $4235 \pm 354$&$-21.11\%$\\
 pop-SAN& $9351 \pm 1$& $4590 \pm 1006$& $9594 \pm 689$& $2772 \pm 1263$& $3307 \pm 1514$&$-6.66\%$\\
 MDC-SAN& $9350 \pm 1$& $4800 \pm 994$& $9147 \pm 231$& $3446 \pm 131$& $3964 \pm 1353$&$0.37\%$\\
 ILC-SAN& $9352 \pm 1$& $5584 \pm 272$& $9222 \pm 615$& $3403 \pm 148$& $4200 \pm 717$&$4.64\%$\\
 \midrule
 tdBN& $9346 \pm 2$& $4403 \pm 1134$& $9402 \pm 527$& $3592 \pm 46$& $3464 \pm 970$&$-2.28\%$\\
 BNTT& $9347 \pm 1$& $4379 \pm 941$& $9466 \pm 659$& $3524 \pm 161$& $3689 \pm 1247$&$-1.62\%$\\
 TEBN& $9349 \pm 1$& $4408 \pm 1156$& $9452 \pm 539$& $3472 \pm 135$& $4235 \pm 381$&$0.69\%$\\
         TABN&$9348 \pm 2$&$4382 \pm 753$&$9784 \pm 169$&$3585 \pm 83$&$4537 \pm 398$&$3.25\%$\\
    \midrule
 CaRe-BN&$9348 \pm 2$&$5373 \pm 159$&$9563 \pm 442$&$3586 \pm 49$&$4296 \pm 268$&$\mathbf{5.90\%}$\\
 \bottomrule
  \end{tabular}
  }
\end{table}
Notably, CaRe-BN trained with TD3 outperforms its ANN counterparts by $5.9\%$ on average\footnote{\textcolor{black}{As shown in Figure~\ref{fig:SAC} in the Appendix, SNNs equipped with CaRe-BN also outperform their ANN counterparts when trained with SAC \citep{SAC2}.}}. This highlights that with proper normalization, SNNs can not only match but exceed the performance of traditional ANN-based RL agents, while retaining their energy-efficient advantages.
\subsection{Ablation Studies}

We conduct ablation studies by separately evaluating the effects of the Confidence-adaptive update (Ca-BN) and the Re-calibration mechanism (Re-BN), as shown in Figure~\ref{Fig:var-ablation} (c). The results demonstrate that both the adaptive estimation and recalibration mechanisms are beneficial on their own. However, their combination provides the most significant improvement. Specifically, Ca-BN addresses the mismatch between training and inference statistics, while Re-BN corrects accumulated errors, further stabilizing training. By integrating both components, CaRe-BN achieves more precise and consistent normalization, leading to superior overall performance.

\subsection{SNN-friendly design}
Dispite the stunning improvement in SNNs, we also evaluate CaRe-BN on standard ANNs trained with TD3 and DDPG, as shown in Figure~\ref{Fig:var-ablation} (d). The results indicate that ANNs with CaRe-BN perform similarly to their baseline counterparts without CaRe-BN. This outcome is expected for the following reasons: \textbf{(\emph{i})} Shallow ANNs can already train stably and effectively without normalization\footnote{In RL, networks typically consist of two hidden layers with $256$ neurons.}, so adding CaRe-BN does not provide significant improvements. \textbf{(\emph{ii})} While CaRe-BN provides more precise estimates of BN statistics, this does not negatively impact the RL training process. These results further underscore that the improvements observed are not due to a stronger RL mechanism, but rather to the SNN-specific normalization strategies.

\section{Conclusion}

In this work, we introduced CaRe-BN, the first batch normalization method specifically designed for SNNs in RL. By addressing the instability of conventional BN in online RL, CaRe-BN enables SNNs to outperform their ANN counterparts in continuous control tasks. Importantly, CaRe-BN is lightweight and easy to integrate, making it a seamless drop-in replacement for existing SNN-RL pipelines without introducing additional computational overhead.

Beyond its technical contributions, CaRe-BN brings SNN-RL one step closer to practical deployment. By stabilizing training and improving exploration, it unlocks the potential of SNNs to act as both energy-efficient and high-performance agents in real-world continuous control applications. We believe this work underscores the importance of normalization strategies tailored to the unique dynamics of SNNs and opens new avenues for bridging the gap between neuromorphic learning and reinforcement learning at scale.

\subsubsection*{Acknowledgments}
This work is supported by the National Natural Science Foundation of China (U24B20140, 62422601), Beijing Municipal Science and Technology Program (Z241100004224004), Beijing Nova Program (20230484362, 20240484703), National Key Laboratory for Multimedia Information Processing, and Beijing Key Laboratory of Brain-inspired Spiking Large Models.

\bibliography{main}
\bibliographystyle{iclr2026_conference}

\appendix
\newpage

\section{Ethics Statement}
Our submission follows the ICLR Code of Ethics. We do not identify any specific ethical concerns in this work. 

\section{Reproducibility Statement}
Source code are provided in the supplementary materials. We also provide our full implementation and experimental configurations in the Appendix. All experiments were conducted on a single NVIDIA RTX 4090 GPU, but the code can also be executed on CPU-only devices, albeit with longer training times. These materials ensure that the reported results can be reproduced and verified by the community.

\section{Use of Large Language Models}
Large Language Models (LLMs) were used solely for polishing the presentation of this paper, such as correcting typos, improving grammar. All ideas, derivations, algorithm design, and experiments were conceived and implemented independently \textbf{without} reliance on LLMs.

\section{Appendix}

\subsection{SNN Architectures}
\subsubsection{\textcolor{black}{Deep Spiking Q-network Architecture}}
\textcolor{black}{The deep spiking Q-network consists of an SNN that receives the $128$-dimensional RAM input using direct coding. The network contains two hidden layers, each with $256$ LIF neurons. The Q-values are obtained by reading out the membrane potentials of the output layer, which uses non-leaky, non-firing neurons to provide stable value estimates.}

\subsubsection{Spiking Actor Network Architecture}
\label{APP:SAN}

The spiking actor network (SAN) consists of a population encoder with Gaussian receptive fields, a multi-layer SNN with a population output, and a decoder with non-firing neurons.

\paragraph{Forward Propagation of the SAN.}
In the state encoder, each input dimension is represented by $N_{\text{in}}$ soft-reset IF neurons with Gaussian receptive fields. These fields have trainable parameters $\mu$ and $\sigma$. The neurons receive stimulation $A_E$ at every time step and output spikes $S^{in}$ according to:
\begin{equation}
    \label{eq:encoder_gaussian}
    A_E = \exp\left[-\frac{1}{2}\frac{(s-\mu)^2}{\sigma^2}\right]
\end{equation}
\begin{equation}
\begin{array}{c}
V_t^{in} = V_{t-1}^{in}- S_{t-1}^{in} + A_E,\\
S_t^{in} = \Theta(V_t^{in} - V_E),
\end{array}
\end{equation}
where $V_E$ is the threshold for the encoding populations.

The final layer of the SNN consists of $N_\text{out}$ neurons, corresponding to each action dimension. The decoder layer consists of non-spiking integrate-and-fire neurons connected to the last layer of the SNN:
\begin{equation}
    V_t^{out}=V_{t-1}^{out} + W^{out}\cdot S_t^L + b^{out},
\end{equation}
where $W^{out}$ and $b^{out}$ are the weights and biases, respectively. The final output action is determined by the membrane potential at the last time step, $a = V_T^{out}$. A detailed description of the forward propagation in the spiking actor network is provided in Algorithm~\ref{algo:SAN}.

\begin{algorithm}
    \caption{Forward propagation of the Spiking Actor Network (SAN)}
    \label{algo:SAN}
    \begin{algorithmic}[1]
        \STATE \textbf{Input:} $M_s$-dimensional observation $s$
        \STATE Compute input population stimulation:
        \[
        A_E = \exp\left[-\tfrac{1}{2}\tfrac{(s-\mu)^2}{\sigma^2}\right]
        \]
        \FOR{$t=1,\dots,T$}
            \STATE Compute encoder membrane potential and spikes:
            \[
            V_t^{in} = V_{t-1}^{in} - S_{t-1}^{in} + A_E,\quad
            S_t^{in} = \Theta(V_t^{in} - V_E)
            \]
            \FOR{$l=1,\dots,L$}
                \STATE Update neurons in layer $l$ at timestep $t$
            \ENDFOR
            \STATE Update decoder membrane potential:
            \[
            V_t^{out} = V_{t-1}^{out} + W^{out}\cdot S_t^L + b^{out}
            \]
        \ENDFOR
        \STATE \textbf{Output:} $M_a$-dimensional action $a = V_T^{out}$
    \end{algorithmic}
\end{algorithm}

\paragraph{Backpropagation of the SAN.}
The SAN parameters are optimized using gradients with respect to the output action $a = V_T^{out}$, given $\tfrac{\partial L}{\partial a}$.

For the decoder:
\begin{equation}
\begin{array}{c}
\frac{\partial L}{\partial W^{out}} = \frac{\partial L}{\partial a}\cdot\frac{\partial V_T^{out}}{\partial W^{out}}, \\
\frac{\partial L}{\partial b^{out}} = \frac{\partial L}{\partial a}\cdot\frac{\partial V_T^{out}}{\partial b^{out}}.
\end{array}
\end{equation}

The main SNN is trained using spatio-temporal backpropagation (STBP) \citep{STBP}, with the rectangular surrogate gradient function defined as:
\begin{equation}
    \Theta'(x)=
    \begin{cases}
        \tfrac{1}{2\omega}, & -\omega \le x \le \omega, \\
        0, & \text{otherwise},
    \end{cases}
\label{eq:SG}
\end{equation}
where $\omega$ denotes the window size.

Next, we derive the gradient of the encoder stimulation $A_E$, as shown in Eq.~\ref{eq:grad_AE}. 
For simplicity, the term $\tfrac{\partial S_t^{in}}{\partial A_E}$ is manually set to $1$, which is a common surrogate assumption to simplify gradient computation:
\begin{equation}
    \frac{\partial L}{\partial A_E}
    = \sum_{t=1}^{T} \frac{\partial L}{\partial S_t^{in}} \cdot \frac{\partial S_t^{in}}{\partial A_E}
    = \sum_{t=1}^{T} \frac{\partial L}{\partial S_t^{in}} .
    \label{eq:grad_AE}
\end{equation}

Finally, the trainable parameters $\mu$ and $\sigma$ of the encoder can be updated as:
\begin{equation}
\begin{array}{c}
\frac{\partial L}{\partial \mu}
= \frac{\partial L}{\partial A_E} \cdot \frac{\partial A_E}{\partial \mu}
= \frac{\partial L}{\partial A_E} \cdot \frac{s-\mu}{\sigma^2} \, A_E, \\[6pt]
\frac{\partial L}{\partial \sigma}
= \frac{\partial L}{\partial A_E} \cdot \frac{\partial A_E}{\partial \sigma}
= \frac{\partial L}{\partial A_E} \cdot \frac{(s-\mu)^2}{\sigma^3} \, A_E.
\end{array}
\end{equation}

\subsection{Spiking Neuron Models}
Section \ref{Sec:snn} introduced the LIF neuron model. Here, we provide the detailed dynamics of the spiking neuron models used in our experiments.

\subsubsection{LIF Neuron Model}
The dynamics of the LIF neuron are defined in Eq. \ref{Eq:SNN}, where the input current is computed as:
\begin{equation}
    C_t^l = W^l S_t^{l-1} + b^l,
    \label{Eq:currrent}
\end{equation}
where $W$ and $b$ denote the synaptic weights and biases, respectively.

\subsubsection{Current-Based LIF (CLIF) Neuron Model}
In the current-based LIF (CLIF) neuron proposed in \cite{popSAN}, the input current in Eq. \ref{Eq:currrent} is modified as:
\begin{equation}
    C_t^l = \lambda_c C_{t-1}^l + W^l S_t^{l-1} + b^l,
\end{equation}
where $\lambda_c$ is the current leakage parameter. All other dynamics of CLIF neurons are identical to those of standard LIF neurons.

\color{black}
\subsubsection{Dynnamic Neuron Model}

The second-order Dynamic Neuron (DN) model proposed in \citep{MDC_SAN} is designed to capture richer temporal dynamics for continuous control. Each DN maintains a membrane potential $V$ and a resistance variable $U$ to model hyperpolarization effects. The neuron dynamics are governed by:
\begin{equation}
    \frac{d V_t^l}{d t} = (V_t^l)^2 - V_t^l - U_t^l + I_t^l ,
\end{equation}
\begin{equation}
    \frac{d U_t^l}{d t} = \theta_{v} V_t^l - \theta_{u} U_t^l ,
\end{equation}
where $\theta_{v}$ and $\theta_{u}$ denote the conductance parameters of $V$ and $U$, respectively. When the neuron fires, the membrane potential $V$ is reset to $V_{\text{reset}}$, and the resistance variable $U$ is incremented by $\theta_{s}$. Using a first-order Taylor expansion, the iterative update of the DN model can be written as:
\begin{equation}
    \begin{array}{l}
 {C}_{t}^{l}=\alpha  \cdot  {C}_{t-1}^{l}+ {W}^{l}  {S}_{t}^{l-1}+ {b}^{l} ; \\
 {V}_{t}^{l}= \left(1- {S}_{t-1}^{l}\right) \cdot{V}_{t-1}^{l} + {S}_{t-1}^{l} \cdot V_{\text{reset}} ; \\
 {U}_{t}^{l}= {U}_{t-1}^{l}+ {S}_{t-1}^{l} \cdot \theta_{u} ; \\
 {V}_{\text {delta }}= {V}_{t}^{l^{2}}- {V}_{t}^{l}- {U}_{t}^{l}+ {C}_{t}^{l} ; \\
 {U}_{\text {delta }}=\theta_{v} \cdot  {V}_{t}^{l}-\theta_{u} \cdot  {U}_{t}^{l} ; \\
 {V}_{t}^{l}= {V}_{t}^{l}+V_{\text {delta }} ; \\
 {U}_{t}^{l}= {U}_{t}^{l}+U_{\text {delta }} ; \\
 {S}_{t}^{l}=\Theta\left( {V}_{t}^{l}-V_{t h}\right) .
\end{array}
\end{equation}

\color{black}
\subsection{Experiment Details}
\label{app:exp_para}

\subsubsection{Compute Resources}
All experiments were conducted on an RTX 4090 GPU (except for the training time study in Appendix \ref{APP:train-cost}).

\subsubsection{Spiking Neuron Parameters} 
The parameters for the LIF and CLIF neurons are listed in Table \ref{tab:LIF_CLIF}. These are the same as those used in \cite{popSAN}, except that the LIF neuron does not include a current leakage parameter. 

\begin{table}[htbp]
  \caption{Parameters of LIF and CLIF \citep{popSAN} neurons}
  \label{tab:LIF_CLIF}
  \centering
  \begin{tabular}{lcc}
  \toprule
 Parameter& LIF&CLIF \citep{popSAN} \\ 
 \midrule
 Membrane leakage parameter $\lambda$&$0.75$&$0.75$ \\
 Threshold voltage $V_{th}$& $0.5$&$0.5$ \\
 Reset voltage $V_{\text{reset}}$& $0$&$0$ \\
 Current leakage parameter $\alpha$& -& $0.5$ \\
 \bottomrule
  \end{tabular}
\end{table}

\color{black}
The parameters of the DN model are listed in Table~\ref{tab:DN}. All values are obtained using the pre-learning procedure described in \citet{MDC_SAN}.

\begin{table}[htbp]
    \color{black}
  \caption{Parameters of the DN \citep{MDC_SAN}}
  \label{tab:DN}
  \centering
  \begin{tabular}{lc}
  \toprule
 Parameter&Value\\ 
 \midrule
 SNN time steps&$5$\\
 Threshold voltage $V_{th}$&$0.5$ \\
 Current leakage parameter $\alpha$&$0.5$ \\
 Conductivity of membrane potential $\theta_v$&$-0.172$\\
 Conductivity of hidden state $\theta_u$&$0.529$\\
 Reset voltage $V_{\text{reset}}$&$0.021$\\
 spike effect to hidden state $\theta_s$&$0.132$\\
 \bottomrule
  \end{tabular}
\end{table}
\color{black}

\subsubsection{Specific Parameters for CaRe-BN} 
Table \ref{tab:care} lists the hyperparameters of CaRe-BN. The recalibration frequency $T_{re}$ is set equal to the evaluation frequency used in the RL algorithms. All hyperparameters are kept consistent across different spiking neuron models and RL algorithms.

\begin{table}[htbp]
  \caption{Hyper-parameters of the CaRe-BN}
  \label{tab:care}
  \centering
  \begin{tabular}{lc}
  \toprule
 Parameter& Value\\ 
 \midrule
 Momentum $\alpha$&$0.8$\\
 Recalibration frequency $T_{re}$& $5000$\\
 Recalibration batchs $M$& $100$\\
 \bottomrule
  \end{tabular}
\end{table}

\subsubsection{Spiking Actor Network Parameters} 
All hyper-parameters of the spiking actor network are listed in Table \ref{tab:SAN}. These settings are consistent with those used in a wide range of previous studies \citep{popSAN, MDC_SAN, ILC_SAN}.

\begin{table}[htbp]
  \caption{Hyper-parameters of the spiking actor network}
  \label{tab:SAN}
  \centering
  \begin{tabular}{lc}
  \toprule
 Parameter& Value\\ 
 \midrule
 Encoder population per dimension $N_{in}$&$10$\\
 Encoder threshold $V_E$& $0.999$\\
  Network hidden units& $(256,256)$\\
 Decoder population per dimension $N_{out}$& $10$\\
 Surrogate gradient window size $\omega$& $0.5$\\
 \bottomrule
  \end{tabular}
\end{table}

\subsubsection{RL Algorithm Parameters} 
The experiments are conducted using \textcolor{black}{DQN \citep{mnih2015human}}), DDPG \citep{DDPG}, TD3 \citep{TD3}, \textcolor{black}{and the SAC \citep{SAC2}} algorithms, with their respective hyperparameters listed in Tables \textcolor{black}{\ref{tab:DQN}}, \ref{tab:DDPG}, \ref{tab:TD3}, \textcolor{black}{and \ref{tab:SAC}}.

\begin{table}[htbp]
\color{black}
  \caption{Hyper-parameters of the implemented DQN algorithm \citep{mnih2015human}}
  \label{tab:DQN}
  \centering
  \begin{tabular}{lc}
  \toprule
 Parameter& Value \\
 \midrule
 Learning rate& $1\cdot10^{-4}$\\
 Network architecture &$(256,256)$\\
 Optimizer & Adam\\
 Target update interval & $2000$\\
 Batch size $N$& $128$\\
 Discount factor $\gamma$& $0.99$\\
 Iterations per time step&$1.0$ \\
 Reward scaling & $1.0$ \\
 Gradient clipping & None \\
 Replay buffer size& $10^{6}$ \\
 Max epsilon & $1$ \\
 Min epsilon & $0.1$ \\
 Epsilon decay steps & $20000$\\

 \bottomrule
  \end{tabular}
\end{table}

\begin{table}[htbp]
  \caption{Hyper-parameters of the implemented DDPG algorithm \citep{DDPG}}
  \label{tab:DDPG}
  \centering
  \begin{tabular}{lc}
  \toprule
 Parameter& Value \\
 \midrule
 Actor learning rate& $1\cdot10^{-4}$\\
  Actor regularization & None\\
 Critic learning rate& $1\cdot10^{-3}$\\
 Critic regularization & weight decay =$0.01$\\
 Critic architecture &$(400,300)$\\
 Critic activation &Relu\\
 Optimizer & Adam\\
 Target update rate $\tau$& $1\cdot10^{-3}$\\
 Batch size $N$& $256$\\
 Discount factor $\gamma$& $0.99$\\
 Iterations per time step&$1.0$ \\
 Reward scaling & $1.0$ \\
 Gradient clipping & None \\
 Replay buffer size& $10^{6}$ \\
 Exploration niose $\mathcal{N}(0,\sigma)$& $\mathcal{N}(0,0.1)$\\
 \bottomrule
  \end{tabular}
\end{table}

\begin{table}[htbp]
  \caption{Hyper-parameters of the implemented TD3 algorithm \citep{TD3}}
  \label{tab:TD3}
  \centering
  \begin{tabular}{lc}
  \toprule
 Parameter& Value \\
 \midrule
 Actor learning rate& $3\cdot10^{-4}$\\
  Actor regularization & None\\
 Critic learning rate& $3\cdot10^{-4}$\\
 Critic regularization & None\\
 Critic architecture &$(256,256)$\\
 Critic activation &Relu\\
 Optimizer & Adam\\
 Target update rate $\tau$& $5\cdot10^{-3}$\\
 Batch size $N$& $256$\\
 Discount factor $\gamma$& $0.99$\\
 Iterations per time step&$1.0$ \\
 Reward scaling & $1.0$ \\
 Gradient clipping & None \\
 Replay buffer size& $10^{6}$ \\
 Exploration niose $\mathcal{N}(0,\sigma)$& $\mathcal{N}(0,0.1)$\\
 Actor update interval $d$& $2$\\
 Target policy noise $\mathcal{N}(0,\tilde{\sigma})$& $\mathcal{N}(0,0.2)$\\
 Target policy noise clip $c$& $0.5$\\
 \bottomrule
  \end{tabular}
\end{table}

\begin{table}[htbp]
\color{black}
  \caption{Hyper-parameters of the implemented SAC algorithm \citep{SAC2}}
  \label{tab:SAC}
  \centering
  \begin{tabular}{lc}
  \toprule
 Parameter& Value \\
 \midrule
 Actor learning rate& $3\cdot10^{-4}$\\
  Actor regularization & None\\
 Critic learning rate& $3\cdot10^{-4}$\\
 Critic regularization & None\\
 Critic architecture &$(256,256)$\\
 Critic activation &Relu\\
 Optimizer & Adam\\
 Target update rate $\tau$& $10^{-3}$\\
 Batch size $N$& $256$\\
 Discount factor $\gamma$& $0.99$\\
 Iterations per time step&$1.0$ \\
 Reward scaling & $1.0$ \\
 Gradient clipping & None \\
 Replay buffer size& $10^{6}$ \\
 Actor update interval $d$& $1$\\
 Entropy target & $-dim(A)$ \\
 Alpha learning rate & $3\cdot10^{-4}$\\

 \bottomrule
  \end{tabular}
\end{table}

\subsubsection{Experiment environments in continuous control}

\begin{figure}
  \centering
\includegraphics[width=1\linewidth]{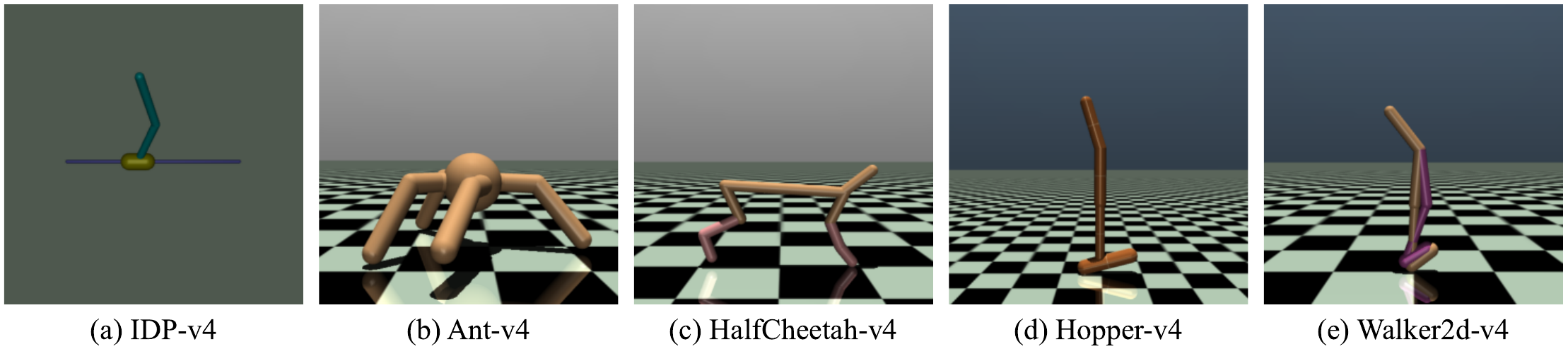}
  \caption{Several continuous control tasks of the MuJoCo environments on OpenAI Gymnasium. (a) InvertedDoublePendulum-v4, (b) Ant-v4, (c) HalfCheetah-v4, (d) Hopper-v4, (e) Walker2d-v4.}
  \label{fig:envs}
\end{figure}

Figure \ref{fig:envs} illustrates various MuJoCo environments \citep{mujoco1, mujoco2} from the OpenAI Gymnasium benchmarks \citep{gym, gymnasium}, including InvertedDoublePendulum (IDP) \citep{IDP}, Ant \citep{GAE}, HalfCheetah \citep{halfcheetah}, Hopper \citep{hopper}, and Walker. All environments used the default configurations without modification.

Note that the state vectors, which can range from $-\infty$ to $\infty$, are normalized to $(-1,1)$ using a tanh function. Similarly, since the actions have minimum and maximum limits, the outputs of the actor network are first normalized to $(-1,1)$ via a tanh function and then linearly scaled to the corresponding $(\text{Min action}, \text{Max action})$ range.

\subsection{Additional Experimental Results}
\color{black}
\subsubsection{Additional Results with SAC} 
In the main text, we demonstrated that CaRe-BN surpass its ANN counterparts using the TD3 algorithm. We further train the SNN agent using SAC, a stronger modern off-policy RL algorithm. As shown in Figure \ref{fig:SAC}, SNNs equipped with CaRe-BN also have the potential to surpass their ANN counterparts under SAC.
\begin{figure}[H]
\color{black}
  \centering
\includegraphics[width=0.66\linewidth]{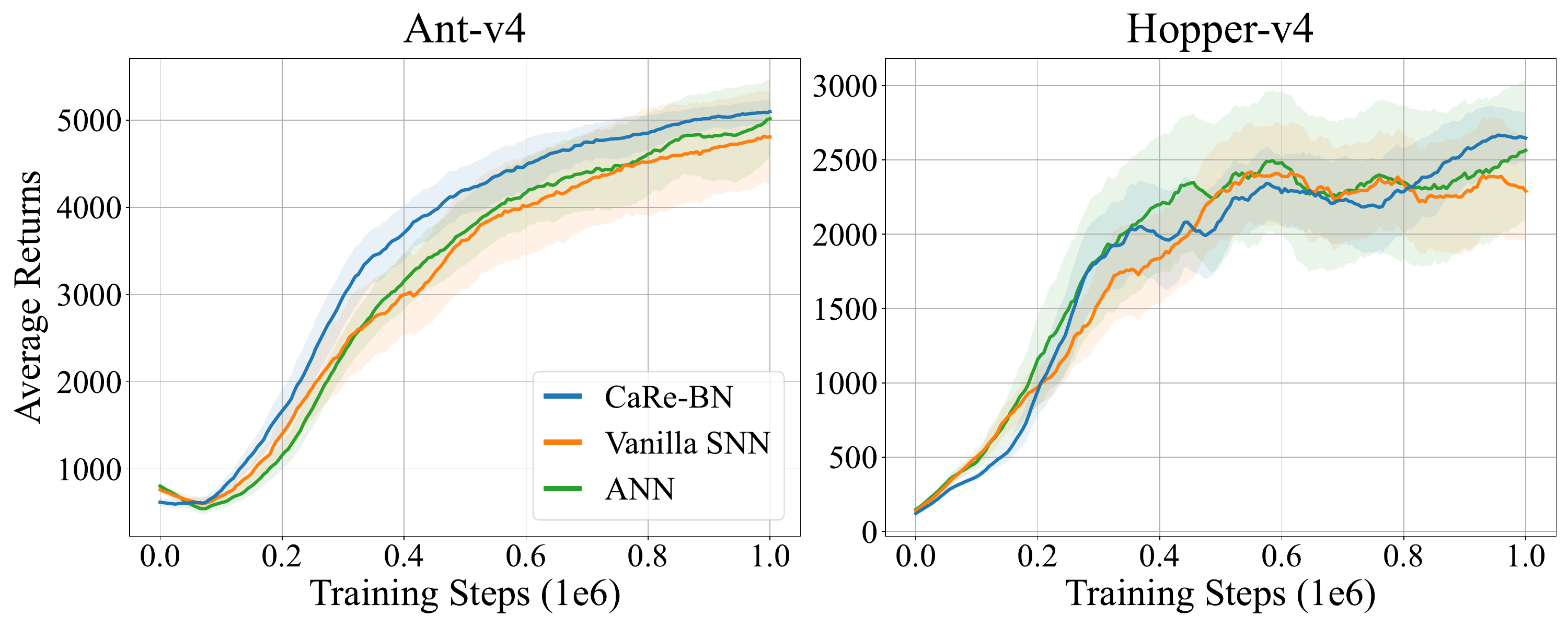}
  \caption{Learning curves of the SNN-based agents using SAC algorithm. Shaded areas represent half a standard deviation across five random seeds. Curves are uniformly smoothed for visual clarity.}
  \label{fig:SAC}
\end{figure}

\color{black}

\subsubsection{Additional Results on Adaptability} 

In the main text, we demonstrated that CaRe-BN improves performance across various spiking neuron models and RL algorithms. Additionally, Tables \ref{tab:cmp_snn:LIF-DDPG}, \ref{tab:cmp_snn:CLIF-DDPG}, \ref{tab:cmp_snn:DN-DDPG}, \ref{tab:cmp_snn:LIF-TD3}, \ref{tab:cmp_snn:CLIF-TD3}, and , \ref{tab:cmp_snn:DN-TD3} report the maximum average returns and the average performance gains of CaRe-BN compared to vanilla SNNs across different spiking neurons and RL algorithms.

\begin{table}[htbp]
  \caption{Max average returns over $5$ random seeds in DDPG with LIF neurons.}
  \label{tab:cmp_snn:LIF-DDPG}
  \centering
  \begin{tabular}{lccccc}
    \toprule
    Method&IDP&HalfCheetah&Hopper&Walker2d& APG\\ \midrule
      Vanilla SNN& $9352 \pm 1$&$7954 \pm 356$& $3035 \pm 127$& $2931 \pm 1395$& $\mathbf{0.00\%}$\\
         CaRe-BN&$9351 \pm 1$&$8199 \pm 305$&$3512 \pm 79$&$3347 \pm 321$&$\mathbf{8.24\%}$\\
    \bottomrule
  \end{tabular}
 \end{table}

 \begin{table}[htbp]
  
  \caption{Max average returns over $5$ random seeds in DDPG with CLIF neurons.}
  \label{tab:cmp_snn:CLIF-DDPG}
  \centering
  \begin{tabular}{lccccc}
    \toprule
    Method&IDP&HalfCheetah&Hopper&Walker2d& APG\\ \midrule
      Vanilla SNN& $9352 \pm 2$&$8205 \pm 376$& $2566 \pm 1270$& $2224 \pm 1607$& $\mathbf{0.00\%}$\\
         CaRe-BN&$9352 \pm 0$&$7972 \pm 245$&$3247 \pm 100$&$3709 \pm 321$&$\mathbf{22.62\%}$\\
    \bottomrule
  \end{tabular}
 \end{table}

  \begin{table}[htbp]
  \color{black}
  \caption{Max average returns over $5$ random seeds in DDPG with DNs.}
  \label{tab:cmp_snn:DN-DDPG}
  \centering
  \begin{tabular}{lccccc}
    \toprule
    Method&IDP&HalfCheetah&Hopper&Walker2d& APG\\ \midrule
      Vanilla SNN& $9351 \pm 3$&$8069 \pm 897$& $3134 \pm 134$& $3238 \pm 633$& $\mathbf{0.00\%}$\\
         CaRe-BN&$9351 \pm 2$&$7731 \pm 457$&$3418 \pm 159$&$3438 \pm 399$&$\mathbf{2.76\%}$\\
    \bottomrule
  \end{tabular}
 \end{table}

\begin{table}[htbp]
  
  \caption{Max average returns over $5$ random seeds in TD3 with LIF neurons.}
  \label{tab:cmp_snn:LIF-TD3}
  \centering
  \begin{tabular}{lcccccc}
    \toprule
    Method&IDP&Ant&HalfCheetah&Hopper&Walker2d& APG\\ \midrule
      Vanilla SNN& $9347 \pm 1$& $4243 \pm 949$&$9073 \pm 946$& $3507 \pm 85$& $2807 \pm 1834$& $\mathbf{0.00\%}$\\
         CaRe-BN&$9346 \pm 1$&$5083 \pm 356$&$8813 \pm 533$&$3489 \pm 118$&$4556 \pm 497$&$\mathbf{15.74\%}$\\
    \bottomrule
  \end{tabular}
 \end{table}

 \begin{table}[htbp]
  
  \caption{Max average returns over $5$ random seeds in TD3 with CLIF neurons.}
  \label{tab:cmp_snn:CLIF-TD3}
  \centering
  \begin{tabular}{lcccccc}
    \toprule
    Method&IDP&Ant&HalfCheetah&Hopper&Walker2d& APG\\ \midrule
      Vanilla SNN& $9351 \pm 1$& $4590 \pm 1006$&$9594 \pm 689$& $2772 \pm 1263$& $3307 \pm 1514$& $\mathbf{0.00\%}$\\
         CaRe-BN&$9348 \pm 2$&$5373 \pm 159$&$9563 \pm 442$&$3586 \pm 49$&$4296 \pm 268$&$\mathbf{15.20\%}$\\
    \bottomrule
  \end{tabular}
 \end{table}

 \begin{table}[htbp]
 \color{black}
  \caption{Max average returns over $5$ random seeds in TD3 with DNs.}
  \label{tab:cmp_snn:DN-TD3}
  \centering
  \begin{tabular}{lcccccc}
    \toprule
    Method&IDP&Ant&HalfCheetah&Hopper&Walker2d& APG\\ \midrule
      Vanilla SNN& $9350 \pm 1$& $4800 \pm 994$&$9147 \pm 231$& $3446 \pm 131$& $3964 \pm 1353$& $\mathbf{0.00\%}$\\
         CaRe-BN&$9349 \pm 2$&$5444 \pm 161$&$9581 \pm 638$&$3470 \pm 115$&$4084 \pm 362$&$\mathbf{4.37\%}$\\
    \bottomrule
  \end{tabular}
 \end{table}

\subsubsection{Additional comparison with ANNs}
Fig.\ref{fig:cmp_overall} shows the normalized learning curves of our CaRe-BN within different spiking neurons.

\begin{figure}[H]
  \centering
\includegraphics[width=1\linewidth]{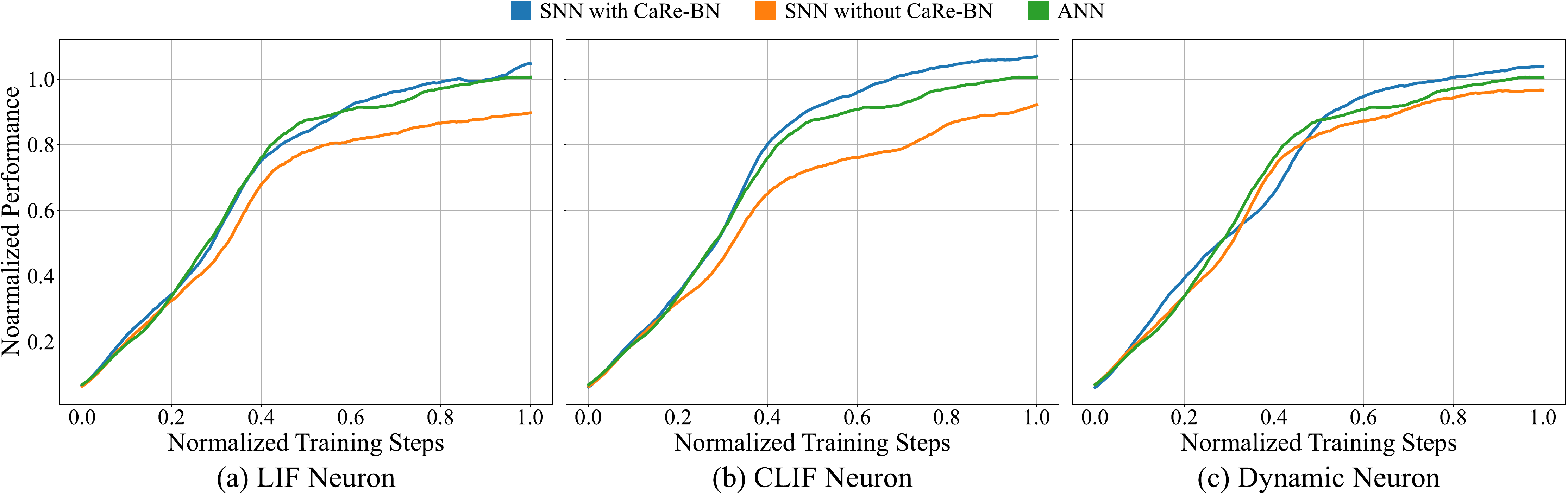}
  \caption{Normalized learning curves across all environments of the TD3 algorithm with different spiking neurons across all environments. The performance and training steps are normalized linearly based on ANN performance. Curves are uniformly smoothed for visual clarity.}
  \label{fig:cmp_overall}
\end{figure}

\subsubsection{Additional comparison with other SNN-BN mechanisms}
Tab. \ref{Tab:BN-cmp}, shows the performance of different BN variants and CaRe-BN with the LIF neuron model in TD3 algorithm.
\begin{table}[H]
  \caption{Max average returns over $5$ random seeds with LIF neuron, and the average performance gain (APG) against ANN baseline, where $\pm$ denotes one standard deviation.}
  \label{Tab:BN-cmp}
  \centering
  \resizebox{0.99\textwidth}{!}{
  \begin{tabular}{lcccccc}
    \toprule
    Method&IDP-v4&Ant-v4&HalfCheetah-v4&Hopper-v4&Walker2d-v4& APG\\ \midrule
      ANN (TD3)& $7503 \pm 3713$& $4770 \pm 1014$&$10857 \pm 475$& $3410 \pm 164$& $4340 \pm 383$& $0.00\%$\\
 Vanilla LIF& $9347 \pm 1$& $4243 \pm 949$& $9073 \pm 946$& $3507 \pm 85$& $2807 \pm 1834$&$-7.08\%$\\
 \midrule
 tdBN& $9346 \pm 1$& $4876 \pm 577$& $8845 \pm 526$& $3601 \pm 29$& $4098 \pm 408$&$1.65\%$\\
 BNTT& $9348 \pm 1$& $5244 \pm 321$& $9339 \pm 874$& $3593 \pm 62$& $3480 \pm 1450$&$1.22\%$\\
 TEBN& $9347 \pm 1$& $4408 \pm 1156$& $9452 \pm 539$& $3472 \pm 135$& $4235 \pm 381$&$0.69\%$\\
         TABN&$9347 \pm 1$&$4431 \pm 1353$&$9173 \pm 595$&$3474 \pm 183$&$3818 \pm 1133$&$-1.64\%$\\
    \midrule
 CaRe-BN&$9346 \pm 1$&$5083 \pm 356$&$8813 \pm 533$&$3489 \pm 118$&$4556 \pm 497$&$3.92\%$\\
 \bottomrule
  \end{tabular}
  }
\end{table}

\subsubsection{Additional results in ANN}
We shows the normalized learning curves of the CaRe-BN with ANN in Fig.\ref{Fig:var-ablation} (d). Here, we show the detailed learning curves and maximum average returns of $5$ environments in Fig.\ref{Fig:ANN-DDPG}, Fig.\ref{Fig:ANN-TD3},  Tab.\ref{tab:ANN-DDPG} and Tab. \ref{tab:ANN-TD3}, respectively.

\color{black}
\subsubsection{Additional results with different SNN simulation time steps.}
We future study the impact of SNN simulation time steps. As shown in Table \ref{Tab:SNN-ts}, SNNs generally benefit from larger simulation time steps, and CaRe-BN achieves even stronger results when using 8 SNN simulation steps (up to $6.32\%$ improvement over ANNs). However, we report the main results using an SNN simulation time step of 5, following the standard configuration adopted in prior SNN-based RL studies \citep{popSAN,MDC_SAN,ILC_SAN}.

\begin{table}[H]
\color{black}
  \caption{Max average returns over $5$ random seeds of CaRe-BN with CLIF spiking neurons trained using the TD3 algorithm, and the average performance gain (APG) against ANN baseline, where $\pm$ denotes one standard deviation.}
  \vspace{3pt}
  \label{Tab:SNN-ts}
  \centering
  \resizebox{0.99\textwidth}{!}{
  \begin{tabular}{lcccccc}
    \toprule
    SNN time steps&IDP-v4&Ant-v4&HalfCheetah-v4&Hopper-v4&Walker2d-v4& APG\\ \midrule
      $2$& $953 \pm 247$& $4924 \pm 171$&$7635 \pm 392$& $3588 \pm 10$& $3885 \pm 1365$& $-23.80\%$\\
 $3$& $9285 \pm 100$& $5078 \pm 325$& $8190 \pm 567$& $3522 \pm 89$& $4391 \pm 282$&$2.03\%$\\
 $5$& $9348 \pm 2$& $5373 \pm 159$& $9563 \pm 442$& $3586 \pm 49$& $4296 \pm 268$&$5.90\%$\\
 $8$&$9354 \pm 1$&$5417 \pm 421$&$9989 \pm 278$&$3479 \pm 95$&$4311 \pm 348$&$6.32\%$\\
 \bottomrule
  \end{tabular}
  }
\end{table}

\color{black}

\begin{figure}[H]
  \centering
\includegraphics[width=0.79\linewidth]{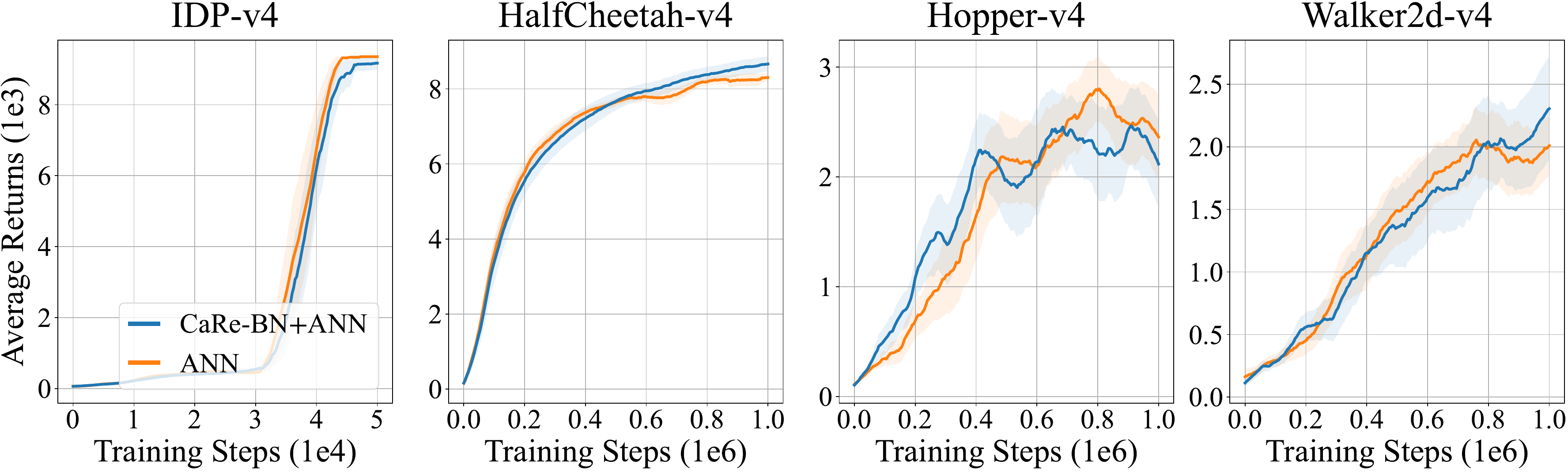}
  \caption{Learning curves of utilizing CaRe-BN in ANN with DDPG algorithm. The shaded region represents half a standard deviation over 5 different seeds. Curves are uniformly smoothed for visual clarity.}
  \label{Fig:ANN-DDPG}
  \end{figure}

\begin{figure}[H]
  \centering
\includegraphics[width=0.99\linewidth]{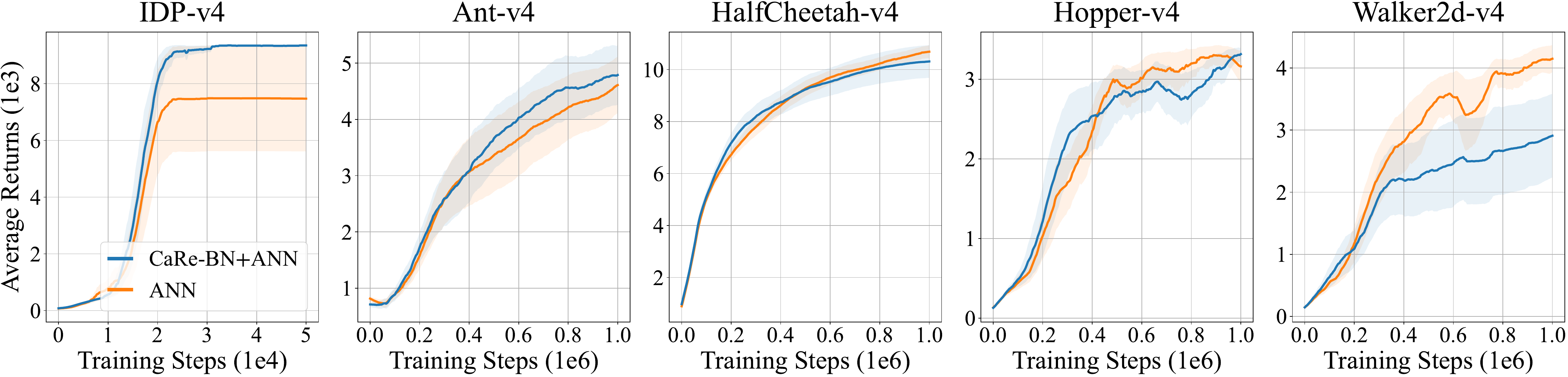}
  \caption{Learning curves of utilizing CaRe-BN in ANN with TD3 algorithm. The shaded region represents half a standard deviation over 5 different seeds. Curves are uniformly smoothed for visual clarity.}
  \label{Fig:ANN-TD3}
\end{figure}

 \begin{table}[htbp]
  
  \caption{Max average returns over $5$ random seeds in DDPG with ANN.}
  \label{tab:ANN-DDPG}
  \centering
  \begin{tabular}{lccccc}
    \toprule
    Method&IDP&HalfCheetah&Hopper&Walker2d& APG\\ \midrule
      Vanilla SNN& $9357 \pm 4$&$8604 \pm 241$& $3486 \pm 162$& $3385 \pm 408$& $\mathbf{0.00\%}$\\
         CaRe-BN&$9360 \pm 0$&$8887 \pm 408$&$3475 \pm 135$&$3328 \pm 882$&$\mathbf{0.33\%}$\\
    \bottomrule
  \end{tabular}
 \end{table}

\begin{table}[htbp]
  
  \caption{Max average returns over $5$ random seeds in TD3 with ANN.}
  \label{tab:ANN-TD3}
  \centering
  \begin{tabular}{lcccccc}
    \toprule
    Method&IDP&Ant&HalfCheetah&Hopper&Walker2d& APG\\ \midrule
      Vanilla SNN& $7503 \pm 3713$& $4770 \pm 1014$&$10857 \pm 475$& $3410 \pm 164$& $4340 \pm 383$& $\mathbf{0.00\%}$\\
         CaRe-BN&$9360 \pm 0$&$5014 \pm 1122$&$10458 \pm 1271$&$3436 \pm 114$&$3021 \pm 1360$&$\mathbf{-0.69\%}$\\
    \bottomrule
  \end{tabular}
 \end{table}

\color{black}
 \subsection{Energy Consumptions}
 \subsubsection{Training Costs}
\label{APP:train-cost}
 To assess the computational overhead introduced by CaRe-BN, we measure the training time and GPU memory usage on an RTX~3090 GPU paired with an Intel(R) Xeon(R) Platinum~8358P CPU. The results are summarized in Table~\ref{tab:train-cost}. As shown, CaRe-BN does not introduce significant additional training time or memory consumption compared with other BN variants.
 \begin{table}[htbp]
  \color{black}
  \caption{Training costs of different BN mechanisms on the Ant-v4 environment, trained with TD3 algorithm and CLIF neurons. Training time corresponds to the total wall-clock time required for 5000 RL steps, including exploration, replay sampling, target computation, and gradient updates.}
  \label{tab:train-cost}
  \centering
  \begin{tabular}{lccccc}
    \toprule
     Training costs&tdBN&BNTT&TEBN&TAB&CaRe-BN\\ \midrule
      Training time for $5000$ updates (s)& $242$& $266$&$251$& $264$& $247$\\
         GPU memory (MiB)&$437$&$437$&$441$&$441$&$437$\\
    \bottomrule
  \end{tabular}
 \end{table}
 \subsubsection{Inferring Costs}
 \begin{table}[!h]
 \color{black}
  \caption{Energy consumption per inference (in nJ) for the spiking actor network with CLIF neurons, trained using TD3 across various tasks.}
  \label{tab:energy}
  \centering
  \resizebox{0.99\textwidth}{!}{
  \begin{tabular}{lcccccc}
    \toprule
    Method&IDP-v4&Ant-v4&HalfCheetah-v4&Hopper-v4&Walker2d-v4& Average\\ \midrule
    ANN & $1715.20$&$ 1862.40$&$1785.60$&$ 1728.00$&$ 1785.60$&$ 1775.36$\\
    SNN with CaRe-BN& $12.94$&$ 17.36$&$ 17.37$&$ 16.59$&$ 18.13$&$16.48$\\
    \bottomrule
  \end{tabular}
  }
\end{table}
 We evaluate the energy consumption of SNNs equipped with CaRe-BN during inference. Energy is estimated following the methodology of \citet{merolla2014million}, where each floating-point operation (FLOP) is assumed to consume $12.5$\,pJ and each synaptic operation (SOP) consumes $77$\,fJ~\citep{qiao2015reconfigurable,hu2021spiking}. As shown in Table~\ref{tab:energy}, the ANN baselines require substantially more energy per inference. In contrast, the SNN models with CaRe-BN demonstrate dramatically reduced energy consumption across all evaluated tasks. These results highlight the strong energy efficiency of SNNs and underscore their potential for deployment on resource-constrained platforms.

\end{document}